\newtheorem{theorem}{Theorem}
\newtheorem{lemma}{Lemma}
\newtheorem{conjecture}{Conjecture}
\icmltitlerunning{Simple and Deep Graph Convolutional Networks}
\begin{document}

\twocolumn[
\icmltitle{Simple and Deep Graph Convolutional Networks}

\begin{icmlauthorlist}
\icmlauthor{Ming Chen}{rucinfo}
\icmlauthor{Zhewei Wei}{gaoling,bj,moe}
\icmlauthor{Zengfeng Huang}{fud}
\icmlauthor{Bolin Ding}{ali}
\icmlauthor{Yaliang Li}{ali}
\end{icmlauthorlist}

\icmlaffiliation{rucinfo}{School of Information, Renmin University of China}
\icmlaffiliation{gaoling}{Gaoling School of Articial Intelligence, Renmin University of China}
\icmlaffiliation{fud}{School of Data Science, Fudan University}
\icmlaffiliation{ali}{Alibaba Group}
\icmlaffiliation{bj}{Beijing Key Lab of Big Data Management and Analysis Methods}
\icmlaffiliation{moe}{MOE Key Lab of Data Engineering and Knowledge Engineering}

\icmlcorrespondingauthor{Zhewei Wei}{zhewei@ruc.edu.cn}

\icmlkeywords{Graph Convolutional Networks}

\vskip 0.3in
]



\printAffiliationsAndNotice{} 

\begin{abstract}
 Graph convolutional networks (GCNs) are a powerful deep learning
approach for graph-structured data.  Recently, GCNs and subsequent
variants have shown superior performance in various application areas on
real-world datasets. Despite their success, most of the current GCN models are
shallow, due to the {\em over-smoothing} problem. In this paper, we
study the problem of designing and analyzing deep graph convolutional networks.  We
propose the GCNII, an extension of the  vanilla GCN model with two
simple yet effective techniques: {\em Initial residual} and {\em Identity mapping}. We provide theoretical and
empirical evidence that the two techniques effectively relieves the problem of over-smoothing. 
Our experiments show that the deep GCNII model outperforms the
state-of-the-art methods on various semi- and full-supervised tasks. 
Code is available at \url{https://github.com/chennnM/GCNII}.
\end{abstract}

\section{Introduction}
\label{sec:intro}
Graph convolutional networks (GCNs)~\cite{DBLP:conf/iclr/KipfW17}  generalize
convolutional neural networks (CNNs)~\cite{lecun1995convolutional} to graph-structured data. To learn the graph
representations, the ``graph convolution'' operation applies the same
linear transformation to all the neighbors of a node followed by a
nonlinear activation function. In recent years, GCNs and their
variants~\cite{DBLP:conf/nips/DefferrardBV16, velickovic2018graph} 
have been successfully applied to a wide range of applications,
including
social analysis~\cite{DBLP:conf/kdd/QiuTMDW018,DBLP:conf/acl/LiG19},
traffic
prediction~\cite{DBLP:conf/aaai/GuoLFSW19,DBLP:conf/kdd/LiHCSWZP19},
biology~\cite{DBLP:conf/nips/FoutBSB17,DBLP:conf/aaai/ShangXMLS19},
recommender systems~\cite{DBLP:conf/kdd/YingHCEHL18}, and computer
vision~\cite{DBLP:conf/cvpr/00030TKM19,DBLP:journals/dase/MaWZCL19}. 


Despite their enormous success, most of the current GCN models are
shallow. Most of the recent
models, such as GCN~\cite{DBLP:conf/iclr/KipfW17}  and
GAT~\cite{velickovic2018graph}, achieve their best performance with 
2-layer models. Such shallow architectures limit their ability to
extract information from
high-order neighbors. However, stacking more layers and adding non-linearity tends
to degrade the performance of these models.
Such a phenomenon is called {\em
  over-smoothing}~\cite{DBLP:conf/aaai/LiHW18}, which suggests that as
the number of layers increases, the representations of the nodes in GCN are inclined to
converge to a certain value and thus become
indistinguishable.
ResNet~\cite{DBLP:conf/cvpr/HeZRS16} solves a  similar problem in  computer
vision with {\em residual connections}, which is effective for training
very deep neural networks. Unfortunately, adding residual connections
in the GCN models merely slows down the over-smoothing
problem~\cite{DBLP:conf/iclr/KipfW17};
deep GCN models are still outperformed by 2-layer models such as GCN or GAT.

Recently, several works try to tackle the problem of
over-smoothing. JKNet~\cite{DBLP:conf/icml/XuLTSKJ18} uses dense skip connections
to combine the output of each layer to preserve the locality of the node
representations. 
Recently, DropEdge~\cite{rong2020dropedge}
suggests that by randomly removing out a few edges from the input graph,
one can relieve the impact of
over-smoothing. Experiments~\cite{rong2020dropedge} suggest that the two methods
can slow down the performance drop as we increase the network
depth. However, for semi-supervised tasks, the
state-of-the-art results are still achieved by the shallow models,
and thus  the benefit brought by increasing the network depth remains in doubt.

On the other hand, several methods combine deep propagation with shallow neural networks.
SGC~\cite{pmlr-v97-wu19e}  attempts to
capture higher-order information in the graph by applying the $K$-th
power of the graph
convolution matrix in a single neural network layer.
PPNP and APPNP~\cite{klicpera_predict_2019} replace the power of
the graph convolution matrix with the Personalized PageRank matrix to
solve the over-smoothing
problem. GDC~\cite{DBLP:conf/nips/KlicperaWG19} further extends APPNP
by generalizing Personalized PageRank~\cite{page1999pagerank}
to an arbitrary graph diffusion process. However, these methods perform a linear
combination of neighbor features in each layer and lose the powerful
expression ability of deep nonlinear architectures,  which means they
are still shallow models.

In conclusion, it remains an open problem to
design a GCN model that effectively
prevents over-smoothing and achieves state-of-the-art results
with  truly deep network structures. 
Due to this challenge, it is even unclear whether the network depth is a
resource or a burden in designing new graph neural networks. 
In this paper, we give a positive answer to this open problem by demonstrating that the vanilla GCN~\cite{DBLP:conf/iclr/KipfW17} 
can be extended to a deep model with two simple yet effective modifications. In
particular, we propose \textbf{G}raph \textbf{C}onvolutional \textbf{N}etwork via
\textbf{I}nitial residual and \textbf{I}dentity mapping ({\bf GCNII}), a deep GCN model that resolves the over-smoothing problem.
At each layer, initial
residual constructs a skip connection from the input
layer, while identity mapping adds an identity matrix
to the weight matrix.
The empirical study demonstrates that the two surprisingly simple
techniques prevent over-smoothing and improve the performance of GCNII
consistently as we increase its
network depth. In
particular, the deep GCNII model  achieves new state-of-the-art results on
various semi-supervised and full-supervised tasks.

Second, we provide theoretical analysis for multi-layer GCN and
GCNII models.  It
is known~\cite{pmlr-v97-wu19e}   that by stacking $k$ layers, the vanilla GCN essentially simulates a
$K$-th order of polynomial filter with predetermined
coefficients. \cite{wang2019improving} points out that such a filter
simulates a {\em lazy random walk}  that eventually
converges to the stationary vector and thus leads to
over-smoothing. 
On the other hand, we prove that  a $K$-layer GCNII model can express a
polynomial spectral filter of order $K$ with arbitrary
coefficients. This property is essential for designing deep neural
networks. We also derive the closed-form of the
stationary vector and analyze the rate of convergence for the vanilla
GCN. Our analysis implies that nodes with high degrees are
more likely to suffer from over-smoothing  in a multi-layer GCN model, and we perform experiments
to confirm this theoretical conjecture.

\section{Preliminaries}
\label{sec:pre}

\paragraph{Notations.}
Given a simple and connected undirected graph \( G=(V,E)\) with $n$ nodes and $m$ edges. We  define the {\em self-looped graph}  \( \tilde{G}=(V, \tilde{E})\) to be the graph with a self-loop attached to each node in $G$. 
We use $\{1, \ldots, n\}$ to denote the node IDs of $G$ and $ \tilde{G}$, and $d_j$ and $d_j+1$ to denote the degree of node $j$ in $G$ and $\tilde{G}$, respectively.  Let $\mathbf{A}$ denote the adjacency matrix and $\mathbf{D}$ the diagonal degree matrix. Consequently,
the adjacency matrix and diagonal degree matrix of $ \tilde{G}$  is defined to be \(\tilde{\mathbf{A}}=\mathbf{A}+\mathbf{I}\) and 
\(\tilde{\mathbf{D}}=\mathbf{D}+\mathbf{I}\), respectively. 
Let $\mathbf{X} \in \mathbf{R}^{n \times d}$ denote the node feature matrix, that is, each node $v$ is associated with a $d$-dimensional feature vector $\mathbf{X}_v$. The {\em normalized graph Laplacian matrix} is defined as  \(\mathbf{L}=\mathbf{I}_{n}-\) \(\mathbf{D}^{-1 / 2} \mathbf{A} \mathbf{D}^{-1 / 2}\), which is a symmetric positive semidefinite matrix with eigendecomposition \(\mathbf{U} \Lambda \mathbf{U}^{T},\). Here \(\Lambda \) is a diagonal matrix of the eigenvalues of $\mathbf{L}$, and  \(\mathbf{U} \in
\mathbf{R}^{n \times n}\) is a unitary matrix that consists of the eigenvectors of $\mathbf{L}$. 
The graph convolution operation between signal  \(\mathbf{x}\) and  filter \(g_{\gamma}(\Lambda)=\operatorname{diag}(\gamma)\) is defined as
\(g_{\gamma}(\mathbf{L}) * \mathbf{x}=\mathbf{U} g_{\gamma}(\Lambda) \mathbf{U}^{T} \mathbf{x},\) where the parameter \(\gamma \in \mathbf{R}^{n}\) corresponds to a vector of spectral filter coefficients. 

\paragraph{Vanilla GCN.}
\cite{DBLP:conf/iclr/KipfW17}  and~\cite{DBLP:conf/nips/DefferrardBV16} suggest that the graph convolution operation can be further approximated by the $K$-th order polynomial of Laplacians
$$
\mathbf{U} g_{\theta}(\Lambda) \mathbf{U}^{T} \mathbf{x} \approx \mathbf{U}\left(\sum_{\ell=0}^{K} \theta_{\ell} \mathbf{\Lambda}^{\ell}\right) \mathbf{U}^{\top} \mathbf{x} =\left(\sum_{\ell=0}^{K} \theta_{\ell} \mathbf{L}^{\ell}\right) \mathbf{x},
$$
where \(\theta \in \mathbf{R}^{K+1}\)  corresponds to a vector of polynomial coefficients. The vanilla GCN~\cite{DBLP:conf/iclr/KipfW17}  sets $K=1$, $\theta_0=2\theta$ and $\theta_1 = -\theta$ to obtain the convolution operation $
\mathbf{g_\theta} * \mathbf{x}=\theta\left(\mathbf{I}+\mathbf{D}^{-1 / 2} \mathbf{A} \mathbf{D}^{-1 / 2}\right) \mathbf{x}
$. Finally, by {\em the renormalization trick}, \cite{DBLP:conf/iclr/KipfW17} replaces the matrix \(\mathbf{I}+\mathbf{D}^{-1 / 2} \mathbf{A} \mathbf{D}^{-1 / 2}\) by a normalized version \( \tilde{\mathbf{P}} = \tilde{\mathbf{D}}^{-1 / 2} \tilde{\mathbf{A}} \tilde{\mathbf{D}}^{-1 / 2} = (\mathbf{D}+ \mathbf{I}_n)^{-1/ 2} (\mathbf{A}+ \mathbf{I}_n)(\mathbf{D}+ \mathbf{I}_n)^{-1/2}\). and obtains the Graph Convolutional Layer
\begin{equation}
    \label{eqn:gcn}
    \mathbf{H}^{(\ell+1)} = \sigma\left( \tilde{\mathbf{P}}  \mathbf{H}^{(\ell)}\mathbf{W}^{(\ell)}\right).
  \end{equation}
Where $\sigma$ denotes the ReLU operation.

SGC~\cite{pmlr-v97-wu19e} shows that by stacking $K$ layers, GCN corresponds to a {\em  fixed  polynomial filter} of order $K$ on the graph spectral domain of $\tilde{G}$. In particular, let \(\tilde{\mathbf{L}}=\mathbf{I}_{n}- \tilde{\mathbf{D}}^{-1 / 2} \tilde{\mathbf{A}} \tilde{\mathbf{D}}^{-1 / 2}\) denote 
the normalized graph Laplacian matrix  of the self-looped graph $\tilde{G}$.  Consequently, applying a $K$-layer GCN to a signal $\mathbf{x}$ corresponds to  \(\left(\tilde{\mathbf{D}}^{-1 / 2} \tilde{\mathbf{A}} \tilde{\mathbf{D}}^{-1 / 2} \right)^K\mathbf{x} = \left(\mathbf{I}_{n}- \tilde{\mathbf{L}}\right)^K \mathbf{x} \). \cite{pmlr-v97-wu19e}  also shows that by adding a self-loop to each node, $\tilde{\mathbf{L}}$ effectively shrinks the underlying graph spectrum.

\paragraph{APPNP.}
\cite{klicpera_predict_2019} uses Personalized PageRank to derive a fixed filter of order $K$. Let $f_{\theta}(\mathbf{X})$ denote the output of a two-layer fully connected neural network on the feature matrix $\mathbf{X}$, PPNP's model is defined as
\begin{equation}
    \label{eqn:ppnp}
    \mathbf{H} = \alpha\left(\mathbf{I}_{n}-(1-\alpha)\tilde{\mathbf{A}}\right)^{-1} f_{\theta}(\mathbf{X}).
  \end{equation}
  Due to the property of Personalized PageRank, such a filter preserves locality and thus is suitable for classification tasks. 
  \cite{klicpera_predict_2019}  also proposes APPNP, which replaces $\alpha\left(\mathbf{I}_{n}-(1-\alpha)\tilde{\mathbf{A}}\right)^{-1}$ with an approximation derived by a truncated power iteration. Formally, APPNP with $K$-hop aggregation is defined as
  \begin{equation}
\label{eqn:APPNP}
    \boldsymbol{H}^{(\ell+1)} =(1-\alpha) \tilde{\boldsymbol{P}} \boldsymbol{H}^{(\ell)}+\alpha \boldsymbol{H}^{(0)},
  \end{equation}
where $ \boldsymbol{H}^{(0)} =f_{\theta}(\boldsymbol{X})$.
  By decoupling feature transformation and propagation, PPNP and APPNP can aggregate information from multi-hop neighbors without increasing the number of layers in the neural network. 

\paragraph{JKNet.}
The first  deep GCN framework is proposed by \cite{DBLP:conf/icml/XuLTSKJ18}.  At the last layer, JKNet combines all previous representations $\left[\mathbf{H}^{(1)}, \ldots, \mathbf{H}^{(K)}\right]$ to learn representations of different orders for different graph substructures. 
\cite{DBLP:conf/icml/XuLTSKJ18}  proves that 1) a $K$-layer vanilla GCN model simulates random walks of $K$ steps in the self-looped graph $\tilde{G}$ and 2) by combining all representations from the previous layers, JKNet relieves the problem of over-smoothing. 

\paragraph{DropEdge}
A recent work~\cite{rong2020dropedge} suggests that randomly removing some edges from $\tilde{G}$ retards the convergence speed of over-smoothing. Let $\tilde{\mathbf{P}}_{\mathrm{drop}}$ denote the renormalized graph convolution matrix with some edge removed at random, the vanilla GCN equipped with DropEdge is defined as
\begin{equation}
    \label{eqn:dropgcn}
    \mathbf{H}^{(\ell+1)} = \sigma\left( \tilde{\mathbf{P}}_{\mathrm{drop}}  \mathbf{H}^{(\ell)}\mathbf{W}^{(\ell)}\right).
\end{equation}

\section{GCNII  Model}
\label{sec:model}
It is known~\cite{pmlr-v97-wu19e}  that by stacking $K$ layers, the
vanilla GCN simulates a polynomial filter $\left(\sum_{\ell=0}^{K}
  \theta_{\ell} \tilde{\mathbf{L}}^{\ell}\right) \mathbf{x}$ of order $K$ with fixed
coefficients $\theta$ on the graph spectral
domain of $\tilde{G}$. The fixed coefficients limit the expressive
power of a multi-layer GCN model and thus leads to over-smoothing.
To extend GCN to a truly deep model, we need to
enable GCN to  express a $K$ order polynomial
filter with {\em arbitrary} coefficients. We show this can be achieved
by two simple
techniques: {\em Initial residual connection} and  {\em Identity
  mapping}.  Formally, we define the $\ell$-th layer of GCNII  as 
    \begin{equation}
    \label{eqn:gcnii_analysis}
    \hspace{-0.7mm} \mathbf{H}^{(\ell+1)} \hspace{-0.7mm}= \hspace{-0.7mm}
   \sigma  \hspace{-0.7mm}\left(  \hspace{-0.7mm}\left( \hspace{-0.5mm}  (1  \hspace{-0.7mm}-  \hspace{-0.7mm}\alpha_\ell)\tilde{\mathbf{P}}
        \mathbf{H}^{(\ell)}  \hspace{-0.7mm} +  \hspace{-0.7mm}
        \alpha_\ell\mathbf{H}^{(0)}  \hspace{-0.7mm}\right)  \hspace{-0.7mm}
      \left(  \hspace{-0.5mm}   (1  \hspace{-0.7mm} -  \hspace{-0.7mm}\beta_\ell) \mathbf{I}_n \hspace{-0.7mm} +
        \hspace{-0.7mm} \beta_\ell \mathbf{W}^{(\ell)}  \hspace{-0.7mm}\right)  \hspace{-0.7mm}\right),
  \end{equation}
where $\alpha_\ell$ and $\beta_\ell$ are two hyperparameters to be
discussed later.
Recall that  \( \tilde{\mathbf{P}} = \tilde{\mathbf{D}}^{-1 / 2}
\tilde{\mathbf{A}} \tilde{\mathbf{D}}^{-1 / 2}\) is the graph
convolution matrix with the renormalization trick.
Note that compared to the vanilla GCN model (equation~\eqref{eqn:gcn}), we make two
modifications: 1) We combine the smoothed representation $\tilde{\mathbf{P}}
        \mathbf{H}^{(\ell)}$ with an initial residual connection to the
first layer $\mathbf{H}^{(0)}$; 2) We add an identity
  mapping  $\mathbf{I}_n $ to the $\ell$-th weight matrix
$\mathbf{W}^{(\ell)} $. 

\paragraph{Initial residual connection.}
To simulate the skip connection in
ResNet~\cite{DBLP:conf/cvpr/HeZRS16}, 
\cite{DBLP:conf/iclr/KipfW17} proposes {\em residual connection} that combines  the
smoothed representation $\tilde{\mathbf{P}}
\mathbf{H}^{(\ell)}$ with $\mathbf{H}^{(\ell)}$. However,  it is also
shown in \cite{DBLP:conf/iclr/KipfW17} that such residual connection only partially relieves the
over-smoothing problem; the performance of the model still degrades as
we stack more layers.

We propose that, instead of using a residual
connection to carry the information from the previous layer, we
construct a connection to the initial representation
$\mathbf{H}^{(0)}$. The initial residual connection ensures that that the final
representation of each node retains at least a fraction of
$\alpha_\ell $ from the input
layer even if we stack many
layers.  In practice, we can simply set $\alpha_\ell =
0.1$ or $0.2$ so that the final representation of each node consists
of at least a fraction of the input feature. 
We also note that  $\mathbf{H}^{(0)}$ does not necessarily have to be the
feature matrix $\mathbf{X}$.  If the feature dimension $d$ is large,
we can apply a fully-connected neural network on $\mathbf{X}$ to obtain a lower-dimensional
initial representation $\mathbf{H}^{(0)}$ before the forward
propagation.

Finally, we recall that APPNP~\cite{klicpera_predict_2019} 
employs a similar approach to the initial residual connection  in the context of Personalized
PageRank. However,  \cite{klicpera_predict_2019}  also shows that performing multiple
non-linearity operations to the feature matrix will lead to
overfitting and thus results in
the performance drop.  Therefore,
APPNP applies a linear combination between different layers and thus remains a shallow
model. This suggests that the idea of initial
residual alone is not sufficient to
extend GCN to a deep model. 

\paragraph{Identity mapping.} To amend the deficiency of APPNP, we
borrow the idea of identity mapping from ResNet.
At the $\ell$-th layer, we add an identity matrix $
\mathbf{I}_n $ to the
weight matrix $\mathbf{W}^{(\ell)} $. In the following, we summarize the motivations for
introducing 
identity mapping into our model.

\begin{itemize}
\item Similar to the motivation of ResNet~~\cite{DBLP:conf/cvpr/HeZRS16},
identity mapping ensures that a deep GCNII model
achieves at
least the same performance as its shallow version does. In particular, by setting
$\beta_\ell$ sufficiently small, deep GCNII ignores the weight matrix
$\mathbf{W}^{(\ell)} $ and essentially simulates APPNP (equation~\eqref{eqn:APPNP}).

\item It has been observed that frequent interaction between
  different dimensions of the feature
  matrix~\cite{klicpera_predict_2019} degrades the performance of the
  model in semi-supervised tasks.  Mapping the
smoothed representation $\tilde{\mathbf{P}}
\mathbf{H}^{(\ell)}$  directly to the output reduces such interaction.

\item Identity
mapping is proved to be particularly useful in  semi-supervised tasks.
It is shown in~\cite{DBLP:conf/iclr/HardtM17} that a linear ResNet of the form
$\mathbf{H}^{(\ell+1)}=\mathbf{H}^{(\ell)}\left(\mathbf{W}^{(\ell)}+\mathbf{I}_n\right)$
satisfies the following properties: 
  1) The optimal weight matrices $\mathbf{W}^{(l)}$ have small norms;
  2)  The only critical point is the global minimum.  The
  first property allows us to put strong regularization on
  $\mathbf{W}^{\ell}$ to avoid over-fitting, while the later is desirable in
  semi-supervised tasks where training data is limited.
  
\item
\cite{oono2020graph} theoretically proves that the
node features of a $K$-layer GCNs will converge to a subspace and incur
information loss. In particular, the rate of convergence depends on   $s^K$, where $s$ is the maximum singular value of
  the weight matrices $\mathbf{W}^{(\ell)}$, $\ell=0, \ldots, K-1$. By replacing
  $\mathbf{W}^{(\ell)}$ with $(1-\beta_\ell) \mathbf{I}_n + \beta_\ell \mathbf{W}^{(\ell)} 
 $ and imposing regularization on $\mathbf{W}^{(\ell)} $,
  we force the norm of $\mathbf{W}^{(\ell)} $ to be small.
  Consequently,  the singular values of $(1-\beta_\ell) \mathbf{I}_n + \beta_\ell \mathbf{W}^{(\ell)} 
 $ will be close to $1$. Therefore, the maximum
  singular value $s$ will also be close to $1$, which implies that $s^K$
  is large, and the information loss is relieved.
\end{itemize}
The principle of setting $\beta_\ell$ is to ensure  the decay of the
weight matrix adaptively increases as we stack more layers. 
  In practice, we set $\beta_\ell = \log(\frac{\lambda}{\ell}+1)
  \approx \frac{\lambda}{\ell}$,  where $\lambda$ is a
  hyperparameter.

\paragraph{Connection to  iterative shrinkage-thresholding.}
Recently, there has been work on optimization-inspired network structure design \cite{DBLP:conf/cvpr/ZhangG18,papyan2017convolutional}. The idea is that a feedforward neural network can be considered as an iterative optimization algorithm to minimize some function, and it was hypothesized that better optimization algorithms might lead to better network structure \cite{li2018optimization}. Thus, theories in numerical optimization algorithms may inspire the design of better and more interpretable network structures. As we will show next, the use of identity mappings in our structure is also well-motivated from this.  We consider the LASSO objective:
\begin{equation*}
\min_{x\in \mathcal{R}^n} \frac{1}{2} \|\mathbf{Bx} - \mathbf{y}\|_2^2 + \lambda \|\mathbf{x}\|_1.
\end{equation*}
Similar to  compressive sensing, we consider $\mathbf{x}$ as the signal we are trying to recover, $\mathbf{B}$ as the measurement matrix, and $\mathbf{y}$ as the signal we observe. In our setting, $\mathbf{y}$ is the original feature of a node, and $\mathbf{x}$ is the node embedding the network tries to learn. As opposed to standard regression models, the design matrix $\mathbf{B}$ is unknown parameters and will be learned through back propagation. So, this is in the same spirit as the sparse coding problem, which has been used to design and to analyze CNNs \cite{papyan2017convolutional}. Iterative shrinkage-thresholding algorithms are effective for solving the above optimization problem, in which the update in the $(t+1)$th iteration is:
\begin{equation*}
\mathbf{x}^{t+1} = P_{\mu_t \lambda}\left( \mathbf{x}^{t} - \mu_t \mathbf{B}^T\mathbf{B} \mathbf{x}^{t} +  \mu_t \mathbf{B}^T \mathbf{y}  \right),
\end{equation*}
Here $\mu_t$ is the step size, and $P_{\beta} (\cdot)$ (with $\beta >0$) is the entry-wise soft thresholding function:
\[
P_{\theta} (z) = \left\{
\begin{array}{lr}
z-\theta, & \text{if } z\ge \theta\\
0, & \text{if } |z|<\theta \\
z+\theta, & \text{if } z\le -\theta
\end{array} \right. .
\]
Now, if we reparameterize $-\mathbf{B^TB}$ by $\mathbf{W}$, the above update formula becomes quite similar to the one used in our method. More spopposeecifically, we have $\mathbf{x}^{t+1} = P_{\mu_t \lambda}\left( (\mathbf{I}+\mu_t \mathbf{W}) \mathbf{x}^{t} +  \mu_t \mathbf{B}^T \mathbf{y}  \right)$, where the term $\mu_t \mathbf{B}^T \mathbf{y}$ corresponds to the initial residual,  and $\mathbf{I}+\mu_t \mathbf{W}$ corresponds to the identity mapping 
in our model~\eqref{eqn:gcnii_analysis}. The soft thresholding operator acts as the nonlinear activation function, which is similar to the effect of ReLU activation. In conclusion, our network structure, especially the use of identity mapping is well-motivated from iterative shrinkage-thresholding algorithms for solving LASSO.

\section{Spectral Analysis}
\label{sec:analysis}
\subsection{Spectral analysis of multi-layer GCN. }
We consider the following GCN model
with residual connection: 
\begin{equation}
    \label{eqn:gcn_res}
    \mathbf{H}^{(\ell+1)} = \sigma\left( \left( \tilde{\mathbf{P}}
      \mathbf{H}^{(\ell)} +  \mathbf{H}^{(\ell)} \right)\mathbf{W}^{(\ell)}\right).
\end{equation}
Recall that  \( \tilde{\mathbf{P}} = \tilde{\mathbf{D}}^{-1 / 2}
\tilde{\mathbf{A}} \tilde{\mathbf{D}}^{-1 / 2}\) is the graph
convolution matrix with the renormalization trick.
\cite{wang2019improving} points out that equation~\eqref{eqn:gcn_res}
simulates a {\em lazy random walk} with the transition matrix  ${\mathbf{I}_n+\tilde{\mathbf{D}}^{-1 / 2}
      \tilde{\mathbf{A}} \tilde{\mathbf{D}}^{-1 / 2} \over 2}$.  Such a lazy random walk eventually
converges to the stationary state and thus leads to
over-smoothing. We now derive the closed-form of the
stationary vector and analyze the rate of such convergence. Our analysis suggests that the converge
rate of an individual node depends on its degree, and we conduct
experiments to back up this theoretical finding. In particular, we have the following Theorem.

\begin{theorem}
  \label{thm:GCN}
  Assume the self-looped graph $\tilde{G}$ is connected. 
  Let $\mathbf{h}^{(K)}= \left({\mathbf{I}_n+\tilde{\mathbf{D}}^{-1 / 2}
      \tilde{\mathbf{A}} \tilde{\mathbf{D}}^{-1 / 2} \over
      2}\right)^K\cdot \mathbf{x}$ denote the representation by applying a
$K$-layer renormalized graph convolution with residual connection to
a graph signal $\mathbf{x}$. Let $\lambda_{\tilde{G}}$ denote the
spectral gap of the self-looped graph $\tilde{G}$, that is,  the least nonzero eigenvalue of the
normalized Laplacian  \(\tilde{\mathbf{L}}=\mathbf{I}_{n}-
\tilde{\mathbf{D}}^{-1 / 2} \tilde{\mathbf{A}} \tilde{\mathbf{D}}^{-1
  / 2}\).  We have
  
1) As $K$ goes to infinity, $\mathbf{h}^{(K)}$ converges to $\bm{\pi} =
{\left<\tilde{\mathbf{D}}^{1 / 2} \mathbf{1}, \mathbf{x} \right> \over
  2m+n} \cdot
\tilde{\mathbf{D}}^{1 / 2} \mathbf{1}
$, where $\mathbf{1}$ denotes an all-one vector.

2) The convergence rate is determined by
\begin{equation}
  \label{eqn:GCN_THM}
  \mathbf{h}^{(K)} = \bm{\pi} \pm\left( \sum_{i=1}^n x_i
  \right) \cdot \left( 1-
    {\lambda_{\tilde{G}}^2 \over 2}\right)^K \cdot  \mathbf{1}.
  \end{equation}

\end{theorem}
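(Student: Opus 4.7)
The plan is to diagonalize the symmetric lazy-walk operator $M := \tfrac{1}{2}(\mathbf{I}_n + \tilde{\mathbf{D}}^{-1/2}\tilde{\mathbf{A}}\tilde{\mathbf{D}}^{-1/2}) = \mathbf{I}_n - \tfrac{1}{2}\tilde{\mathbf{L}}$ and read off both the limit and the rate from its spectrum. Because $\tilde{G}$ is connected, $\tilde{\mathbf{L}}$ admits an orthonormal eigenbasis $u_1,\ldots,u_n$ with eigenvalues $0 = \mu_1 < \mu_2 = \lambda_{\tilde{G}} \le \cdots \le \mu_n \le 2$, so the eigenvalues of $M$ are $1-\mu_i/2 \in [0,1]$, with $1$ attained only at $i=1$. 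The identity $\tilde{\mathbf{D}}^{-1/2}\tilde{\mathbf{A}}\tilde{\mathbf{D}}^{-1/2}\cdot\tilde{\mathbf{D}}^{1/2}\mathbf{1} = \tilde{\mathbf{D}}^{1/2}\mathbf{1}$, together with $\|\tilde{\mathbf{D}}^{1/2}\mathbf{1}\|_2^2 = \sum_j(d_j+1) = 2m+n$, identifies the top eigenvector as $u_1 = \tilde{\mathbf{D}}^{1/2}\mathbf{1}/\sqrt{2m+n}$.

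For part (1), I would expand $\mathbf{x} = \sum_i c_i u_i$ with $c_1 = \langle \tilde{\mathbf{D}}^{1/2}\mathbf{1},\mathbf{x}\rangle/\sqrt{2m+n}$, write $M^K\mathbf{x} = \sum_i c_i (1-\mu_i/2)^K u_i$, and note that all contributions with $i \ge 2$ satisfy $|1-\mu_i/2|<1$ and therefore vanish as $K\to\infty$, leaving $c_1 u_1 = \tfrac{\langle \tilde{\mathbf{D}}^{1/2}\mathbf{1},\mathbf{x}\rangle}{2m+n}\,\tilde{\mathbf{D}}^{1/2}\mathbf{1} = \bm{\pi}$. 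For part (2), the residual $M^K\mathbf{x} - \bm{\pi} = \sum_{i\ge 2} c_i(1-\mu_i/2)^K u_i$ admits the spectral estimate $\|M^K\mathbf{x}-\bm{\pi}\|_2 \le (1-\lambda_{\tilde{G}}/2)^K\|\mathbf{x}\|_2$, using $\max_{i\ge 2}|1-\mu_i/2| = 1-\lambda_{\tilde{G}}/2$ (the lower end $1-\mu_n/2$ is nonnegative). To weaken this into the stated base $1 - \lambda_{\tilde{G}}^2/2$, one can invoke the elementary comparison $1-\lambda_{\tilde{G}}/2 \le 1-\lambda_{\tilde{G}}^2/2$ valid whenever $\lambda_{\tilde{G}}\le 1$, which is the regime of interest for sparse graphs.

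The main obstacle is the final conversion: packaging the $\ell_2$ operator-norm estimate into the specific entrywise form $\pm(\sum_i x_i)\,(1-\lambda_{\tilde{G}}^2/2)^K\mathbf{1}$, in which the error is a scalar multiple of the all-ones direction with coefficient homogeneous in the $\ell_1$-mass $\sum_i x_i$ of the signal. The natural attack is to write $\sum_{i\ge 2}c_i u_i = \mathbf{x} - c_1 u_1$, bound the residual entrywise by $\max_j |\sum_{i\ge 2} c_i (1-\mu_i/2)^K u_i(j)|$, and then replace this coefficient by a uniform multiple of $\langle \mathbf{1},\mathbf{x}\rangle$ via Cauchy–Schwarz against $\tilde{\mathbf{D}}^{1/2}\mathbf{1}$. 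The delicate point is that the statement's $\pm$ notation is entrywise against $\mathbf{1}$ rather than a norm bound, so I expect the tidy route is to pass to the degree-weighted inner product in which $M$ is self-adjoint and the all-ones vector is the natural invariant direction; once the rate is controlled in this inner product, the unweighted entrywise bound against $\mathbf{1}$ drops out by comparing norms, and the appearance of $\lambda_{\tilde{G}}^2$ is exactly what allows the looser entrywise absorption to go through.
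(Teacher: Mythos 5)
Your approach is a genuinely different route from the paper's. The paper does not do a spectral decomposition at all: it rewrites $\mathbf{h}^{(K)} = \tilde{\mathbf{D}}^{-1/2}\bigl(\tfrac{\mathbf{I}_n+\tilde{\mathbf{A}}\tilde{\mathbf{D}}^{-1}}{2}\bigr)^K\tilde{\mathbf{D}}^{1/2}\mathbf{x}$ via a similarity transformation, expands $\tilde{\mathbf{D}}^{1/2}\mathbf{x}$ in standard basis vectors, and then invokes a pre-packaged Cheeger-type mixing lemma for lazy walks (Lemma A.1, citing Chung 2007) that gives the entrywise bound $|\mathbf{p}^{(K)}_i(j) - \tfrac{d_j+1}{2m+n}| \le \sqrt{\tfrac{d_j+1}{d_i+1}}(1-\lambda_{\tilde{G}}^2/2)^K$ directly, with the $\lambda^2$ already present; the rest is bookkeeping. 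You instead diagonalize the symmetric operator $M$ from scratch, which is more self-contained and in fact gives the \emph{tighter} intrinsic rate $(1-\lambda_{\tilde{G}}/2)^K$; you then deliberately weaken it. Both are legitimate, but the paper buys the entrywise control and the specific $\lambda^2$ rate for free by citing Chung, while yours has to manufacture them.

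Two remarks on the part you flag as the main obstacle. First, the entrywise conversion is not as delicate as you suspect and you do not need to pass to a weighted inner product: from $M^K\mathbf{x}-\bm{\pi} = \sum_{i\ge 2} c_i(1-\mu_i/2)^K u_i$, Cauchy--Schwarz on the $j$-th coordinate gives $|(M^K\mathbf{x}-\bm{\pi})(j)| \le (1-\lambda_{\tilde{G}}/2)^K \bigl(\sum_{i\ge 2}c_i^2\bigr)^{1/2}\bigl(\sum_{i\ge 2}u_i(j)^2\bigr)^{1/2} \le (1-\lambda_{\tilde{G}}/2)^K\|\mathbf{x}\|_2$, since $\sum_i u_i(j)^2=1$ by orthonormality. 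Then $\|\mathbf{x}\|_2 \le \|\mathbf{x}\|_1 = \sum_i x_i$ once you make explicit the (implicit in the paper as well) assumption $\mathbf{x}\ge 0$ --- this is where the $\ell_1$-mass enters, not via an inner product against $\tilde{\mathbf{D}}^{1/2}\mathbf{1}$. Second, your worry about whether $\lambda_{\tilde{G}}\le 1$ is actually moot: for the self-looped graph one always has $\operatorname{tr}(\tilde{\mathbf{L}}) = n - \sum_j \tfrac{1}{d_j+1} \le n-1$ (since $d_j+1\le n$), and dividing the trace over the $n-1$ nonzero eigenvalues gives $\lambda_{\tilde{G}} \le 1$ unconditionally, with equality only for the complete graph. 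So the comparison $1-\lambda_{\tilde{G}}/2 \le 1-\lambda_{\tilde{G}}^2/2$ needed to match the stated base always holds, and your argument closes. The only genuine gap is that you stated the closing steps as a hoped-for strategy rather than carrying them out; once the two points above are made, the proof is complete and somewhat sharper than the paper's (which inherits the $\lambda^2$ loss from the cited lemma).
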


Recall that $m$ and $n$ are the number of nodes and edges in the
original graph $G$.
We use  the operator $\pm$ to indicate that  for each entry $ \mathbf{h}^{(K)} (j)$ and $\bm{\pi}(j)
$, $j=1, \ldots, n$, 
$$\left| \mathbf{h}^{(K)} (j) -  \bm{\pi}(j) \right| \le \left( \sum_{i=1}^n x_i
  \right) \cdot \left( 1-
    {\lambda_{\tilde{G}}^2 \over 2}\right)^K .$$
  The proof of Theorem~\ref{thm:GCN} can be found in the supplementary
materials. 
There are two consequences from Theorem~\ref{thm:GCN}. First of all,
it suggests that the $K$-th representation  of GCN $\mathbf{h}^{(K)}$
converges to a vector  $\bm{\pi} =
{\left<\tilde{\mathbf{D}}^{1 / 2} \mathbf{1}, \mathbf{x} \right> \over
  2m+n} \cdot
\tilde{\mathbf{D}}^{1 / 2} \mathbf{1}
$. Such convergence leads to over-smoothing as the vector
$\bm{\pi}$  only carries the two kinds of information: the degree of
each node, and  the inner product between the
initial 
signal $\mathbf{x}$ and vector $\mathbf{D}^{1 / 2}
\mathbf{1}$. 

\paragraph{Convergence rate and node degree.}
Equation~\eqref{eqn:GCN_THM} suggests that 
the  converge rate depends on the summation of feature entries
$\sum_{i=1}^n x_i$ and the spectral gap
$\lambda_{\tilde{G}}$. If we take a closer look at the {\em relative converge
rate} for an individual node $j$, we can express its final representation $\mathbf{h}^{(K)}(j) $
as 
\begin{align*}
  \hspace{-2mm}\mathbf{h}^{(K)}(j) \hspace{-0.7mm}=
  \hspace{-0.7mm}\sqrt{d_j+1}   \hspace{-0.7mm} \left(
  \hspace{-0.7mm}\sum_{i=1}^n{\sqrt{d_i  \hspace{-0.7mm} +
  \hspace{-0.7mm} 1} \over 2m  \hspace{-0.7mm} +  \hspace{-0.7mm}  n}x_i
       \hspace{-0.7mm}  \pm  \hspace{-0.7mm}{ \sum_{i=1}^n x_i
  \hspace{-0.7mm} \left( 1  \hspace{-0.7mm}-  \hspace{-0.7mm} {\lambda_{\tilde{G}}^2 \over 2}\right)^K\over \sqrt{d_j+1}}  \hspace{-0.7mm}\right).
 \end{align*}

 This suggests that if a node $j$ has a higher degree of $d_j$ (and hence a
 larger $ { \sqrt{d_j+1}}$), its representation $\mathbf{h}^{(K)}(j) $ converges
 faster to the stationary state $\bm{\pi}(j)$. Based on this fact, we make the
 following conjecture.
 \begin{conjecture}
   \label{con:degree}
   Nodes with higher degrees are more likely to suffer from over-smoothing.
 \end{conjecture}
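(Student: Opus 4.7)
The plan is to treat the conjecture as a qualitative statement that is strongly suggested by Theorem~\ref{thm:GCN} and then confirmed empirically, since ``more likely to suffer'' admits no canonical formal definition. My first step is to fix a precise per-node measure of over-smoothing, for which I would take the relative error $E_j(K) := \bigl|\mathbf{h}^{(K)}(j) - \bm{\pi}(j)\bigr|/\bigl|\bm{\pi}(j)\bigr|$ after $K$ layers of the residual GCN in the theorem. A small $E_j(K)$ means that the representation of node $j$ has essentially collapsed to its stationary value and has lost its distinguishing signal, which is what I want to call ``suffering from over-smoothing.''

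Next I would derive an upper bound on $E_j(K)$ directly from Theorem~\ref{thm:GCN}. Using the closed form $\bm{\pi}(j) = \sqrt{d_j+1}\cdot \sum_{i}\sqrt{d_i+1}\,x_i/(2m+n)$ together with the entrywise bound $|\mathbf{h}^{(K)}(j) - \bm{\pi}(j)| \le \bigl(\sum_i x_i\bigr)\bigl(1-\lambda_{\tilde G}^2/2\bigr)^K$, a direct division yields
\[
E_j(K) \;\le\; \frac{C(\mathbf{x},G)}{\sqrt{d_j+1}}\cdot \Bigl(1-\tfrac{\lambda_{\tilde G}^2}{2}\Bigr)^K,
\]
with $C(\mathbf{x},G) = (2m+n)\bigl|\sum_i x_i\bigr|/\bigl|\sum_i \sqrt{d_i+1}\,x_i\bigr|$ independent of $j$. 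Hence at every fixed depth $K$ the upper bound on the relative distance to the stationary vector is monotonically decreasing in $d_j$; this is essentially the effect discussed in the ``Convergence rate and node degree'' paragraph preceding the conjecture, repackaged as a per-node inequality.

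The main obstacle is that Theorem~\ref{thm:GCN} concerns the linear operator $(\mathbf{I}_n+\tilde{\mathbf{P}})/2$ acting on a single-channel signal, whereas a deployed GCN uses trained multi-channel weights and ReLU nonlinearities, so the bound above is evidence rather than a rigorous proof of the behaviour observed in practice. To close this gap I would complement the analysis with experiments on standard benchmarks such as Cora, Citeseer and Pubmed: run a multi-layer vanilla GCN, bucket test nodes by degree, and report both per-bucket classification accuracy and a per-node smoothness proxy such as $\|\mathbf{H}^{(K)}_j - \bar{\mathbf{H}}^{(K)}\|_2$. A monotone degradation of the high-degree buckets as $K$ grows would confirm the conjecture. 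I expect the delicate step to be disentangling the effect of degree from confounders such as label homophily or feature magnitude, rather than any step in the theoretical bound itself.
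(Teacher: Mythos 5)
Your proposal matches the paper's own treatment: the authors likewise regard the conjecture as a qualitative claim, motivate it by factoring $\sqrt{d_j+1}$ out of $\mathbf{h}^{(K)}(j)$ so that the residual term carries an extra $1/\sqrt{d_j+1}$ (which is exactly your relative-error bound $E_j(K)\lesssim (1-\lambda_{\tilde G}^2/2)^K/\sqrt{d_j+1}$, merely written as a ratio rather than a factorization), and then verify it empirically on Cora, Citeseer, and Pubmed by bucketing nodes into degree ranges $[2^i,2^{i+1})$ and plotting per-bucket accuracy against depth. Your caveat about the gap between the single-channel linear operator in Theorem~\ref{thm:GCN} and a trained multi-channel ReLU network is well taken and is precisely why the paper phrases this as a conjecture backed by experiments rather than as a theorem.
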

We will verify Conjecture~\ref{con:degree} on real-world datasets
 in our experiments.
 
\subsection{Spectral analysis of GCNII}
We consider the spectral domain of the self-looped graph $\tilde{G}$.
Recall that a polynomial filter of order $K$ on a graph signal
$\mathbf{x}$ is defined as $\left(\sum_{\ell=0}^{K}
  \theta_{\ell} \tilde{\mathbf{L}}^{\ell}\right) \mathbf{x}$, where
$\tilde{\mathbf{L}}$ is the normalized Laplacian matrix of
$\tilde{G}$ and $\theta_k$'s are the polynomial coefficients. \cite{pmlr-v97-wu19e} proves that a $K$-layer GCN
simulates a polynomial filter of order $K$ with fixed coefficients
$\theta$. As we shall prove later, such fixed coefficients limit the expressive
power of GCN and thus leads to over-smoothing. On the other hand, 
we show a $K$-layer GCNII model can  express a $K$ order polynomial
 filter with arbitrary coefficients. 
\begin{theorem}
  \label{thm:GCNII}
Consider the self-looped graph $\tilde{G}$ and a graph signal $\mathbf{x}$.
  A $K$-layer GCNII can  express a $K$ order polynomial filter $\left(\sum_{\ell=0}^{K}
    \theta_{\ell} \tilde{\mathbf{L}}^{\ell}\right) \mathbf{x}$ with arbitrary
  coefficients $\theta$.
  \end{theorem}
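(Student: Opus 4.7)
The plan is to specialize GCNII to the linear scalar-feature setting, write the output as an explicit polynomial in $\tilde{\mathbf{P}}$, and then solve for the hyperparameters that realize an arbitrary polynomial. Since $\tilde{\mathbf{P}} = \mathbf{I}_n - \tilde{\mathbf{L}}$, every polynomial of degree $K$ in $\tilde{\mathbf{P}}$ is also one in $\tilde{\mathbf{L}}$ (and vice versa) via an invertible triangular change of basis, so it suffices to realize arbitrary $\tilde{\mathbf{P}}$-coefficients and translate at the end.

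First I would take $\mathbf{W}^{(\ell)} = w_\ell$ (a scalar, obtained by restricting to a single feature channel) and drop the ReLU, reading the theorem as a statement about the polynomial filter the architecture realizes on its pre-activations. Letting $c_\ell = (1-\beta_\ell)+\beta_\ell w_\ell$, $a_\ell = (1-\alpha_\ell)c_\ell$, and $b_\ell = \alpha_\ell c_\ell$, equation~\eqref{eqn:gcnii_analysis} collapses to the affine recursion $\mathbf{h}^{(\ell+1)} = a_\ell\tilde{\mathbf{P}}\mathbf{h}^{(\ell)} + b_\ell\mathbf{h}^{(0)}$. An easy induction then gives
\begin{equation*}
\mathbf{h}^{(K)} = \Bigl(\prod_{\ell=0}^{K-1}a_\ell\Bigr)\tilde{\mathbf{P}}^K\mathbf{h}^{(0)} + \sum_{k=0}^{K-1}b_{K-1-k}\Bigl(\prod_{\ell=K-k}^{K-1}a_\ell\Bigr)\tilde{\mathbf{P}}^k\mathbf{h}^{(0)}.
\end{equation*}

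Next I would match this to an arbitrary target $\sum_{k=0}^K \gamma_k\tilde{\mathbf{P}}^k\mathbf{x}$. Assuming $\gamma_K\neq 0$, pick any nonzero $a_0,\ldots,a_{K-2}$ and set $a_{K-1} = \gamma_K\big/\prod_{\ell=0}^{K-2}a_\ell$ to match the leading coefficient; then read off $b_{K-1-k} = \gamma_k\big/\prod_{\ell=K-k}^{K-1}a_\ell$ for each $k<K$. From $(a_\ell,b_\ell)$ I recover hyperparameters via $c_\ell = a_\ell + b_\ell$, $\alpha_\ell = b_\ell/c_\ell$, and, for any chosen $\beta_\ell\neq 0$, $w_\ell = 1+(c_\ell-1)/\beta_\ell$. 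Translating $\gamma\mapsto\theta$ through $\tilde{\mathbf{P}}^k = (\mathbf{I}_n-\tilde{\mathbf{L}})^k$ finishes the argument because this coefficient map is upper triangular with $\pm 1$ diagonal, hence invertible.

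The main obstacle is giving a clean story for the nonlinearity $\sigma$: strictly speaking, an arbitrary-sign polynomial filter cannot be realized exactly after ReLU, so the proof must make explicit that Theorem~\ref{thm:GCNII} is a statement about the family of filters the architecture is capable of producing on its pre-activations (equivalently, about the linear span realized when $\sigma$ is replaced by the identity), after which the construction above applies verbatim. A secondary and minor annoyance is the degenerate case $\gamma_K=0$, which requires setting some $a_\ell=0$ (equivalently $\alpha_\ell=1$, which ``resets'' propagation at that layer) and checking that the remaining layers can still reproduce the lower-degree target; this is routine bookkeeping once the generic $\gamma_K\neq 0$ case is established.
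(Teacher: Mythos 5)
Your proof is correct and follows the same high-level strategy as the paper: linearize GCNII (drop the ReLU, restrict to a scalar channel), unroll the recursion to a polynomial in $\tilde{\mathbf{P}}=\mathbf{I}_n-\tilde{\mathbf{L}}$, and match coefficients to the target filter. The interesting difference is in how the degrees of freedom are allocated. The paper fixes $\alpha_\ell=0.5$ and collapses the whole map $(1-\beta_\ell)\mathbf{I}_n+\beta_\ell\mathbf{W}^{(\ell)}$ to a single scalar $\gamma_\ell\mathbf{I}_n$, so the coefficient of $\tilde{\mathbf{P}}^{\ell}$ in the expansion is a suffix product $\prod_{k}\gamma_k$; inverting this forces a chain of ratios (the paper's equation for $\gamma_{K-\ell-1}$) and produces an awkward degenerate case when a denominator vanishes, patched by a limiting argument. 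You instead keep both $\alpha_\ell$ (through $b_\ell$) and the weight scalar (through $a_\ell$) free, so the coefficient matching is a single division per layer after fixing the leading product $\prod a_\ell$; this sidesteps the nested-ratio inversion and makes the generic-case argument cleaner. The trade-off is that you must separately check the small bookkeeping that $c_\ell=a_\ell+b_\ell\neq 0$ (so $\alpha_\ell$ is well defined), which is easy since the $a_\ell$ are freely chosen.

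On the ReLU, you are actually more careful than the paper. The paper assumes $\mathbf{x}\ge 0$ and asserts the ReLU can be dropped, but this only holds layer-by-layer when the $\gamma_\ell$ (equivalently your $c_\ell$) are nonnegative, which the paper's solution does not guarantee; for negative coefficients the pre-activation can go negative and the ReLU is no longer the identity. Your framing — that the theorem is a statement about the family of pre-activation filters the architecture can realize, i.e., with $\sigma$ replaced by the identity — makes the scope of the claim explicit rather than implicitly relying on a sign condition that is not enforced. Both proofs establish the theorem in the same effective sense; yours just states the caveat rather than glossing over it. One last small note: the degenerate case $\gamma_K=0$ that you flag is harmless in your construction precisely because you get to choose the $a_\ell$; you can take $a_{K-1}$ close to zero and absorb the lower-order target into the $b_\ell$'s, or simply use $K-1$ effective layers, so this is indeed routine as you say.
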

The proof of Theorem~\ref{thm:GCNII} can be found in the supplementary
materials. Intuitively, the parameter $\beta$ allows GCNII to
simulate the coefficient $\theta_\ell$ of the polynomial filter. 

\paragraph{Expressive power and over-smoothing.}
The ability to express a polynomial filter with
arbitrary coefficients is essential for preventing over-smoothing. To see why this is the case, recall that Theorem~\ref{thm:GCN} suggests a $K$-layer vanilla GCN simulates a fixed $K$-order polynomial filter $\tilde{\mathbf{P}}^{K}\mathbf{x}$, where $\tilde{\mathbf{P}}$ is the renormalized graph convolution matrix. Over-smoothing is caused by the fact that $\tilde{\mathbf{P}}^{K}\mathbf{x}$ converges to a distribution isolated from the input feature $\mathbf{x}$ and thus incuring gradient vanishment. DropEdge~\cite{rong2020dropedge} slows down the rate of convergence, but eventually will fail as $K$ goes to infinity. 

On the other hand, Theorem~\ref{thm:GCNII} suggests that deep GCNII converges to a distribution that carries information from both the input feature and the graph structure. This property alone ensures that GCNII will not suffer from over-smoothing even if the number of layers goes to infinity. More precisely, 
Theorem~\ref{thm:GCNII} states that a $K$-layer GCNII can express $\mathbf{h}^{(K)}  = \left(\sum_{\ell=0}^{K}
\theta_{\ell} \tilde{\mathbf{L}}^{\ell}\right) \cdot\mathbf{x}$ with arbitrary coefficients $\theta$. Since the renormalized graph convolution matrix $\tilde{\mathbf{P}} = \mathbf{I}_n - \tilde{\mathbf{L}}$, it follows that $K$-layer GCNII can express $\mathbf{h}^{(K)}  = \left(\sum_{\ell=0}^{K}
\theta'_{\ell} \tilde{\mathbf{P}}^{\ell}\right)\cdot \mathbf{x}$ with arbitrary coefficients $\theta'$.  Note that with a proper choice of $\theta'$, $\mathbf{h}^{(K)}$ can carry information from both the input feature and the graph structure even with K going to infinity. For example, APPNP~\cite{klicpera_predict_2019} and GDC~\cite{DBLP:conf/nips/KlicperaWG19} set $\theta'_i = \alpha(1-\alpha)^i$ for some constant $0<\alpha <1$. As K goes to infinity, $\mathbf{h}^{(K)}  = \left(\sum_{\ell=0}^{K}
\theta'_{\ell} \tilde{\mathbf{P}}^{\ell}\right)\cdot \mathbf{x}$ converges to the Personalized PageRank vector of $\mathbf{x}$, which is a function of both the adjacency matrix $\tilde{\mathbf{A}}$ and the input feature vector $\mathbf{x}$. The difference between GCNII and APPNP/GDC is that 1) the coefficient vector theta in our model is learned from the input feature and the label, and 2) we impose a ReLU operation at each layer. 

\section{Other Related Work}
\label{sec:related}

Spectral-based GCN has been extensively studied for the past few years. \cite{DBLP:conf/aaai/LiWZH18} improves flexibility by learning a task-driven adaptive graph for each graph data while training. \cite{xu2018graph} uses the graph wavelet basis instead of the Fourier basis to improve sparseness and locality. Another line of works focuses on the attention-based GCN model~\cite{velickovic2018graph,180303735,DBLP:conf/uai/ZhangSXMKY18}, which learn the edge weights at each layer based on node features. \cite{DBLP:conf/icml/Abu-El-HaijaPKA19} learn neighborhood mixing relationships by mixing of neighborhood information at various distances but still uses a two-layer model. \cite{DBLP:conf/icml/GaoJ19,DBLP:conf/icml/LeeLK19} devote to extend pooling operations to graph neural network. For unsupervised information, \cite{DBLP:conf/iclr/VelickovicFHLBH19} train graph convolutional encoder through maximizing mutual information. \cite{Pei2020GeomGCN} build structural neighborhoods in the latent space of graph embedding for aggregation to extract more structural information. \cite{DBLP:journals/dase/DaveZCH19} uses a single representation vector to capture both topological information and nodal attributes in graph embedding. Many of the sampling-based methods proposed to improve the scalability of GCN. \cite{hamilton2017inductive} uses a fixed size of neighborhood samples through layers, \cite{DBLP:conf/iclr/ChenMX18,DBLP:conf/nips/Huang0RH18} propose efficient variants based on importance sampling. \cite{DBLP:conf/kdd/ChiangLSLBH19} construct minibatch based on graph clustering.

\section{Experiments}
\label{sec:exp}
\begin{table}[t]
    \caption{Dataset statistics.}
    \label{dataset-table}
    \vskip 0.10in

    \begin{tabular}{lrrrr}
    \toprule
    Dataset & Classes & Nodes & Edges & Features \\
    \midrule
    Cora        & 7   &  2,708 &   5,429 & 1,433 \\
    Citeseer    & 6   &  3,327 &   4,732 & 3,703 \\
    Pubmed      & 3   & 19,717 &  44,338 & 500 \\
    Chameleon   & 4   &  2,277 &  36,101 & 2,325 \\
    Cornell     & 5   &    183 &    295  & 1,703 \\
    Texas       & 5   &    183 &    309  & 1,703 \\
    Wisconsin   & 5   &    251 &    499  & 1,703 \\
    PPI         & 121 & 56,944 & 818,716 & 50 \\
    \bottomrule
    \end{tabular}

\end{table}

In this section, we evaluate the performance of GCNII against
the state-of-the-art graph neural network models on a wide variety of open graph datasets.

\paragraph{Dataset and experimental setup.}
We use three standard citation network datasets Cora, Citeseer, and
Pubmed~\cite{DBLP:journals/aim/SenNBGGE08} for semi-supervised node
classification. In these citation datasets, nodes correspond to
documents, and edges correspond to citations; each node feature corresponds to
the bag-of-words representation of the document and  belongs to one
of the academic topics. 
For full-supervised node classification, we also
include  Chameleon~\cite{1909-13021}, Cornell, Texas, and
Wisconsin~\cite{Pei2020GeomGCN}. These datasets are web networks,
where nodes and edges represent web pages and hyperlinks, respectively. The feature of each node is the bag-of-words
representation of the corresponding page. For inductive learning, we
use Protein-Protein Interaction (PPI)
networks~\cite{hamilton2017inductive},  which contains 24
graphs. Following the setting of previous work~\cite{velickovic2018graph},  we
use  20 
graphs for training, 2 graphs for validation, and the rest for
testing. Statistics of the datasets are summarized in  Table~\ref{dataset-table}.

Besides GCNII~\eqref{eqn:gcnii_analysis}, we also include GCNII*, a variant of GCNII that employs different weight matrices
for the smoothed representation
$\tilde{\mathbf{P}}\mathbf{H}^{(\ell)}$  and the  initial residual
$\mathbf{H}^{(0)}$. Formally, the $(\ell+1)$-th layer of GCNII* is
defined as
    \begin{align*}
   \mathbf{H}^{(\ell+1)} &= 
   \sigma   \left((1-\alpha_\ell)\tilde{\mathbf{P}}
        \mathbf{H}^{(\ell)}  \left((1-\beta_\ell) \mathbf{I}_n +
                                        \beta_\ell
                                         \mathbf{W}_1^{(\ell)}  \right) +\right.\\
      & \quad \left.+ 
        \alpha_\ell\mathbf{H}^{(0)}  
      \left((1-\beta_\ell) \mathbf{I}_n +
      \beta_\ell \mathbf{W}_2^{(\ell)}\right)  \right).
  \end{align*}
  As mentioned in Section~\ref{sec:model}, we set $\beta_\ell =
  \log(\frac{\lambda}{\ell}+1) \approx \lambda/\ell$, where $\lambda$ is a hyperparameter. 

  \begin{table}[t]
    \caption{Summary of classification accuracy ($\%$) results on Cora, Citeseer, and Pubmed. 
            The number in parentheses corresponds to the number of
            layers of the model.}
    \label{semi-table}
    \vskip 0.10in
    \begin{small}
    \setlength{\tabcolsep}{1mm}{
    \begin{tabular}{llll}
    \toprule
    Method & Cora & Citeseer & Pubmed \\
    \midrule
    GCN & 81.5 & 71.1 & 79.0 \\
      GAT & 83.1 & 70.8 & 78.5 \\
      APPNP & 83.3 & 71.8 & 80.1 \\
    JKNet & 81.1 (4) & 69.8 (16)& 78.1 (32) \\
    JKNet(Drop) & 83.3 (4) & 72.6 (16)& 79.2 (32) \\
    Incep(Drop) & 83.5 (64) & 72.7 (4) & 79.5 (4) \\  

    \midrule
    GCNII & \textbf{85.5 $\pm$ 0.5} (64)& \textbf{73.4 $\pm$ 0.6} (32)& 80.2 $\pm$ 0.4 (16)\\
    GCNII* & 85.3 $\pm$ 0.2 (64)& 73.2 $\pm$ 0.8 (32)& \textbf{80.3 $\pm$ 0.4} (16)\\
    \bottomrule
    \end{tabular}}
    \end{small}
    \vskip -0.1in
  \end{table}

 \subsection{Semi-supervised Node Classification}
\paragraph{Setting and baselines.}
For the semi-supervised node classification task, we apply the standard fixed training/validation/testing
split~\cite{DBLP:conf/icml/YangCS16} on three datasets Cora, Citeseer, and
 Pubmed,  with 20 nodes per class for
training, 500 nodes for validation and
1,000 nodes  for testing. For baselines, we include two recent deep GNN models: 
JKNet~\cite{DBLP:conf/icml/XuLTSKJ18} and
DropEdge~\cite{rong2020dropedge}. As suggested
in~\cite{rong2020dropedge}, we equip DropEdge on three 
backbones: GCN~\cite{DBLP:conf/iclr/KipfW17}, JKNet~\cite{DBLP:conf/icml/XuLTSKJ18} and IncepGCN~\cite{rong2020dropedge}.
We also include three state-of-the-art shallow models:
GCN~\cite{DBLP:conf/iclr/KipfW17}, GAT~\cite{velickovic2018graph} and
APPNP~\cite{klicpera_predict_2019}.

We use  the Adam SGD optimizer~\cite{DBLP:journals/corr/KingmaB14}
with a learning rate of 0.01 and early stopping with a patience of 100
epochs to train GCNII and GCNII*. We set $\alpha_\ell=0.1$ and $L_{2}$ regularization to $0.0005$ for
the dense layer on all datasets. 
We perform a grid search to tune the
other hyper-parameters for models with different depths based on the
accuracy on the validation set. More details of hyper-parameters are
listed in the supplementary
materials.

\paragraph{Comparison with SOTA.}
Table~\ref{semi-table} reports the mean classification accuracy with the standard deviation on
the test nodes of GCN and GCNII after 100 runs. We reuse the metrics
already reported in~\cite{Fey/Lenssen/2019}  for GCN, GAT, and APPNP,
and the best metrics reported in~\cite{rong2020dropedge} for JKNet,
JKNet(Drop) and Incep(Drop). Our results successfully demonstrate that
GCNII and GCNII* achieves new state-of-the-art performance across all
three datasets. Notably, GCNII outperforms the previous
state-of-the-art methods by at least $2\%$. It is also worthwhile to
note that  the two recent deep models, JKNet and IncepGCN with DropEdge, do
not seem to offer significant advantages over the shallow model
APPNP. On the other hand, our method achieves this result with a 64-layer model, which
demonstrates the benefit of deep network structures.

\begin{table}[t]
    \caption{Summary of classification accuracy ($\%$)  results with various
      depths.}
    \label{depth-table}
    \vskip 0.10in
    \begin{small}
    \setlength{\tabcolsep}{1.mm}{
    \begin{tabular}{ll|cccccc}
        \toprule
            \multirow{2}{*}{Dataset} & \multirow{2}{*}{Method} & \multicolumn{6}{c}{Layers} \\
                                     &  & 2  & 4  & 8 & 16 & 32 & 64 \\
            \midrule
            \multirow{7}{*}{Cora}       & GCN & \textbf{81.1}  & 80.4  & 69.5  & 64.9  & 60.3  & 28.7  \\
                                        & GCN(Drop) & \textbf{82.8}  & 82.0  & 75.8  & 75.7  & 62.5  & 49.5  \\
                                        & JKNet & - & 80.2  & 80.7  & 80.2  & \textbf{81.1} & 71.5  \\
                                        & JKNet(Drop) & - & \textbf{83.3}  & 82.6  & 83.0  & 82.5 & 83.2  \\
                                        & Incep & - & 77.6  & 76.5  & 81.7  & \textbf{81.7} & 80.0  \\
                                        & Incep(Drop) & - & 82.9  & 82.5  & 83.1  & 83.1 & \textbf{83.5}  \\
                                        & GCNII & 82.2  & 82.6  & 84.2  & 84.6  & 85.4  & \textbf{85.5}  \\
                                        & GCNII* & 80.2  & 82.3  & 82.8  & 83.5  & 84.9  & \textbf{85.3}  \\
            \midrule
            \multirow{7}{*}{Citeseer}   & GCN & \textbf{70.8}  & 67.6  & 30.2  & 18.3  & 25.0  & 20.0  \\
                                        & GCN(Drop) & \textbf{72.3}  & 70.6  & 61.4  & 57.2  & 41.6  & 34.4  \\
                                        & JKNet & - & 68.7  & 67.7  & \textbf{69.8}  & 68.2 & 63.4  \\
                                        & JKNet(Drop) & - & 72.6  & 71.8  & \textbf{72.6} & 70.8 & 72.2  \\
                                        & Incep & - & 69.3  & 68.4  & \textbf{70.2}  & 68.0 & 67.5  \\
                                        & Incep(Drop) & - & \textbf{72.7}  & 71.4  & 72.5  & 72.6 & 71.0  \\
                                        & GCNII & 68.2  & 68.9  & 70.6  & 72.9  & \textbf{73.4}  & 73.4  \\
                                        & GCNII* & 66.1  & 67.9  & 70.6  & 72.0  & \textbf{73.2}  & 73.1  \\
            \midrule
            \multirow{7}{*}{Pubmed}     & GCN & \textbf{79.0}  & 76.5  & 61.2  & 40.9  & 22.4  & 35.3  \\
                                        & GCN(Drop) & \textbf{79.6}  & 79.4  & 78.1  & 78.5  & 77.0  & 61.5  \\
                                        & JKNet & - & 78.0  & \textbf{78.1}  & 72.6  & 72.4 & 74.5  \\
                                        & JKNet(Drop) & - & 78.7  & 78.7  & 79.1  & \textbf{79.2} & 78.9 \\
                                        & Incep & - & 77.7  & \textbf{77.9}  & 74.9  & OOM & OOM  \\
                                        & Incep(Drop) & - & \textbf{79.5}  & 78.6  & 79.0  & OOM & OOM  \\
                                        & GCNII & 78.2  & 78.8  & 79.3  & \textbf{80.2}  & 79.8  & 79.7  \\
                                        & GCNII* & 77.7  & 78.2  & 78.8  & \textbf{80.3}  & 79.8  & 80.1  \\
        \bottomrule
    \end{tabular}}
    \end{small}
    \vskip -0.1in
\end{table}

\begin{table}[t]
    \caption{Summary of Micro-averaged F1 scores on PPI.}
    \label{ppi-table}
    \vskip 0.15in
    \begin{center}
    \begin{tabular}{ll}
    \toprule
    Method & PPI\\
    \midrule
    GraphSAGE~\cite{hamilton2017inductive} & 61.2 \\
    VR-GCN~\cite{DBLP:conf/icml/ChenZS18} & 97.8 \\
    GaAN~\cite{DBLP:conf/uai/ZhangSXMKY18} & 98.71 \\
    GAT~\cite{velickovic2018graph} & 97.3 \\
    JKNet~\cite{DBLP:conf/icml/XuLTSKJ18} & 97.6 \\
    GeniePath~\cite{DBLP:conf/aaai/LiuCLZLSQ19} & 98.5 \\
    Cluster-GCN~\cite{DBLP:conf/kdd/ChiangLSLBH19} & 99.36 \\
    \midrule
    GCNII & 99.53 $\pm$ 0.01 \\
    GCNII* & \textbf{99.56} $\pm$ \textbf{0.02}\\
    \bottomrule
    \end{tabular}
    \end{center}
    \vskip -0.1in
  \end{table}
  
\begin{table*}[t]
    \caption{Mean classification accuracy of full-supervised node classification. 
            }
    \label{fulltrain-table}
    \vskip 0.10in
    \begin{center}
    \begin{tabular}{lllllllll}
    \toprule
    Method & Cora & Cite. & Pumb. & Cham. & Corn. & Texa. & Wisc. \\
    \midrule
    GCN & 85.77 & 73.68 & 88.13 & 28.18 & 52.70 & 52.16 & 45.88 \\
    GAT & 86.37 & 74.32 & 87.62 & 42.93 & 54.32 & 58.38 & 49.41 \\
    Geom-GCN-I & 85.19 & \textbf{77.99} & 90.05 & 60.31 & 56.76 & 57.58 &
                                                                     58.24 \\
    Geom-GCN-P & 84.93 & 75.14 & 88.09 & 60.90 & 60.81 & 67.57 &
                                                                       64.12 \\
    Geom-GCN-S & 85.27 & 74.71 & 84.75 & 59.96 & 55.68 & 59.73 & 56.67 \\
    APPNP & 87.87 & 76.53 & 89.40 & 54.3 & 73.51 & 65.41 & 69.02 \\
    JKNet & 85.25 (16)& 75.85 (8)& 88.94 (64) & 60.07 (32) & 57.30 (4) & 56.49 (32) & 48.82 (8) \\
    JKNet(Drop) & 87.46 (16)& 75.96 (8)& 89.45 (64) & 62.08 (32) & 61.08 (4) & 57.30 (32) & 50.59 (8) \\
    Incep(Drop) & 86.86 (8)& 76.83 (8)& 89.18 (4) & 61.71 (8) & 61.62 (16) & 57.84 (8) & 50.20 (8) \\
    \midrule
    GCNII & \textbf{88.49} (64) & 77.08 (64) & 89.57 (64) & 60.61 (8) & 74.86 (16) & 69.46 (32) & 74.12 (16) \\
    GCNII* & 88.01 (64) & 77.13 (64) & \textbf{90.30} (64) & \textbf{62.48} (8) & \textbf{76.49} (16) & \textbf{77.84} (32) & \textbf{81.57} (16)\\
    \bottomrule
    \end{tabular}
    \end{center}
    \vskip -0.1in
  \end{table*}

\paragraph{A detailed comparison with other deep models.}
Table~\ref{depth-table} summaries the results for the deep models with
various numbers of layers. We reuse the best-reported results for JKNet,
JKNet(Drop) and
Incep(Drop)~$\footnote{https://github.com/DropEdge/DropEdge}$.
We observe that on Cora and Citeseer, the performance of  GCNII and GCNII* consistently
improves as we increase the number of layers. On Pubmed, GCNII and GCNII* achieve
the best results with 16 layers, and maintain similar performance as
we increase the network depth to 64. We attribute this quality to the
identity mapping technique. Overall, the results suggest
that with initial residual and identity mapping, we can resolve the 
over-smoothing problem and extend the vanilla GCN into a truly deep model. 
On the other hand, the
performance of GCN
with DropEdge and JKNet drops rapidly as the number of layers exceeds 32,
which means they still suffer from over-smoothing. 

\subsection{Full-Supervised Node Classification}
We now evaluate GCNII in the task of full-supervised node
classification.  Following the setting in~\cite{Pei2020GeomGCN}, we
use 7 datasets: Cora, Citeseer, Pubmed, Chameleon, Cornell, Texas, and
Wisconsin.
For each datasets, we randomly split
nodes of each class into 60\%, 20\%, and 20\% for training, validation
and testing, and measure the performance of
all models on the test sets over 10 random splits, as suggested in~\cite{Pei2020GeomGCN}.
We
fix the learning rate to 0.01, dropout rate to 0.5 and the number of hidden units to 64 on all
datasets and perform a hyper-parameter search to tune other
hyper-parameters based on the validation set. Detailed configuration of all model
for full-supervised node classification can be found in the
supplementary materials.
Besides the
previously mentioned baselines, we also include three variants
of Geom-GCN~\cite{Pei2020GeomGCN} as they are the state-of-the-art
models on these datasets.

Table~\ref{fulltrain-table} reports the mean classification accuracy
of each model. We reuse the
metrics already reported in~\cite{Pei2020GeomGCN} for GCN, GAT, and
Geom-GCN.
We observe that GCNII and GCNII* achieves new state-of-the-art results
on 6 out of 7 datasets, which demonstrates the superiority of the deep
GCNII framework. Notably, GCNII* outperforms APPNP by over 12\% on the
Wisconsin dataset. This result suggests that by introducing
non-linearity into each layer, the predictive power of GCNII is
stronger than that of the linear model APPNP.

\subsection{Inductive Learning}
For the inductive learning task, we apply  9-layer GCNII and GCNII* models with
2048 hidden units on the PPI dataset. We fix the following sets of
hyperparameters: $\alpha_\ell=0.5$, $\lambda=1.0$ and learning rate of
0.001. Due to the large volume of training data, we set the dropout rate to 0.2
and the weight decay to
zero. Following~\cite{velickovic2018graph},  we also add a skip
connection from the $\ell$-th layer to the $(\ell+1)$-th layer of
GCNII and GCNII* to speed up the convergence of the training process. 
We compare GCNII with the
following state-of-the-art methods:
GraphSAGE~\cite{hamilton2017inductive},
VR-GCN~\cite{DBLP:conf/icml/ChenZS18},
GaAN~\cite{DBLP:conf/uai/ZhangSXMKY18},
GAT~\cite{velickovic2018graph}, JKNet~\cite{DBLP:conf/icml/XuLTSKJ18},
GeniePath~\cite{DBLP:conf/aaai/LiuCLZLSQ19},
Cluster-GCN~\cite{DBLP:conf/kdd/ChiangLSLBH19}. The metrics are summarized
in Table~\ref{ppi-table}.

In concordance with our expectations, the results show that GCNII and GCNII*
achieve new state-of-the-art performance on PPI. In particular, 
GCNII achieves this performance with a 9-layer model, while the number of
layers with all baseline
models are less or equal to 5. This suggests that larger predictive
power can also be leveraged by increasing the network depth in the
task of inductive learning.
 
\begin{figure*}[!t]
    \begin{center}
     \includegraphics[height=42mm]{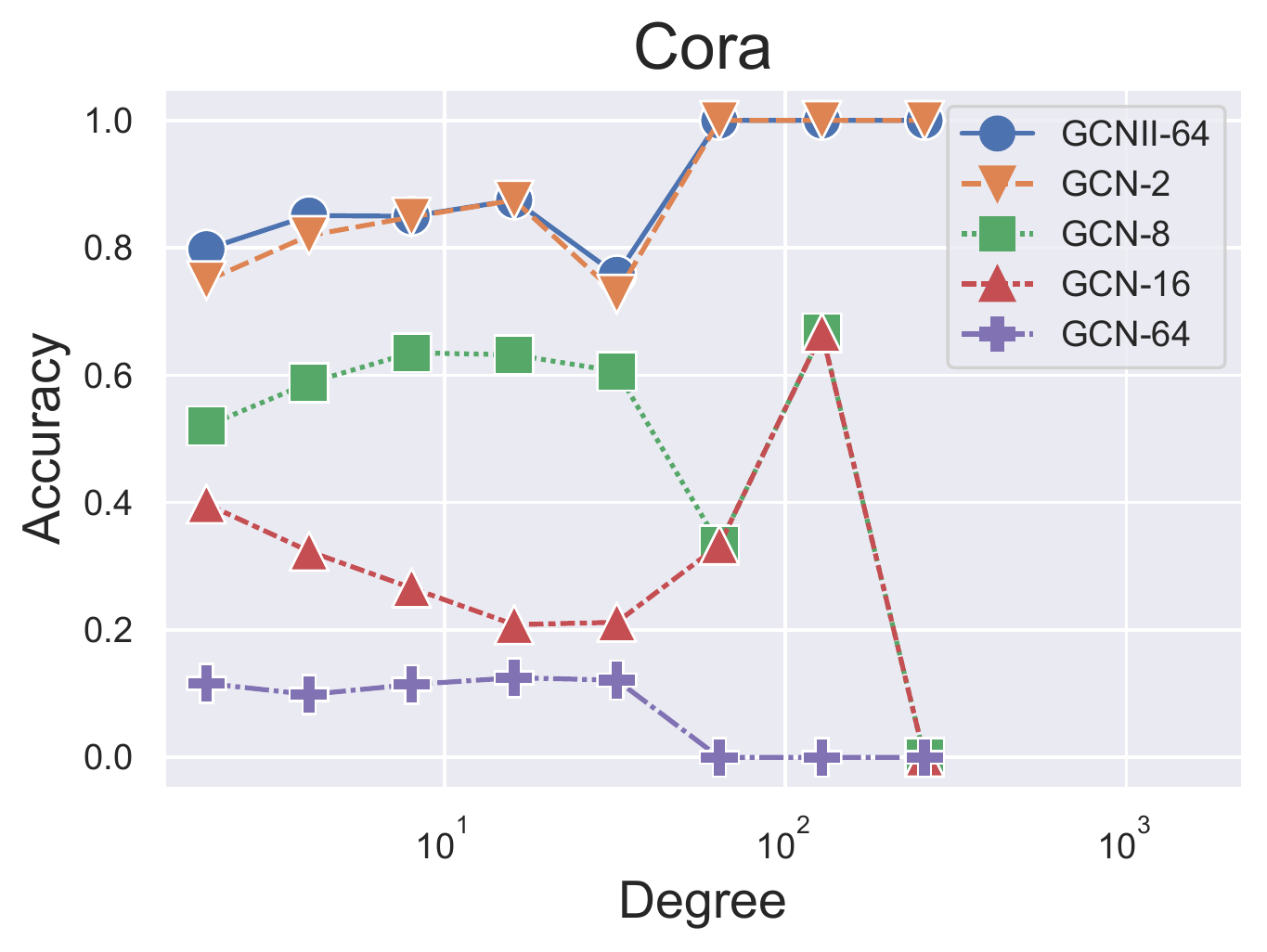}
     \includegraphics[height=42mm]{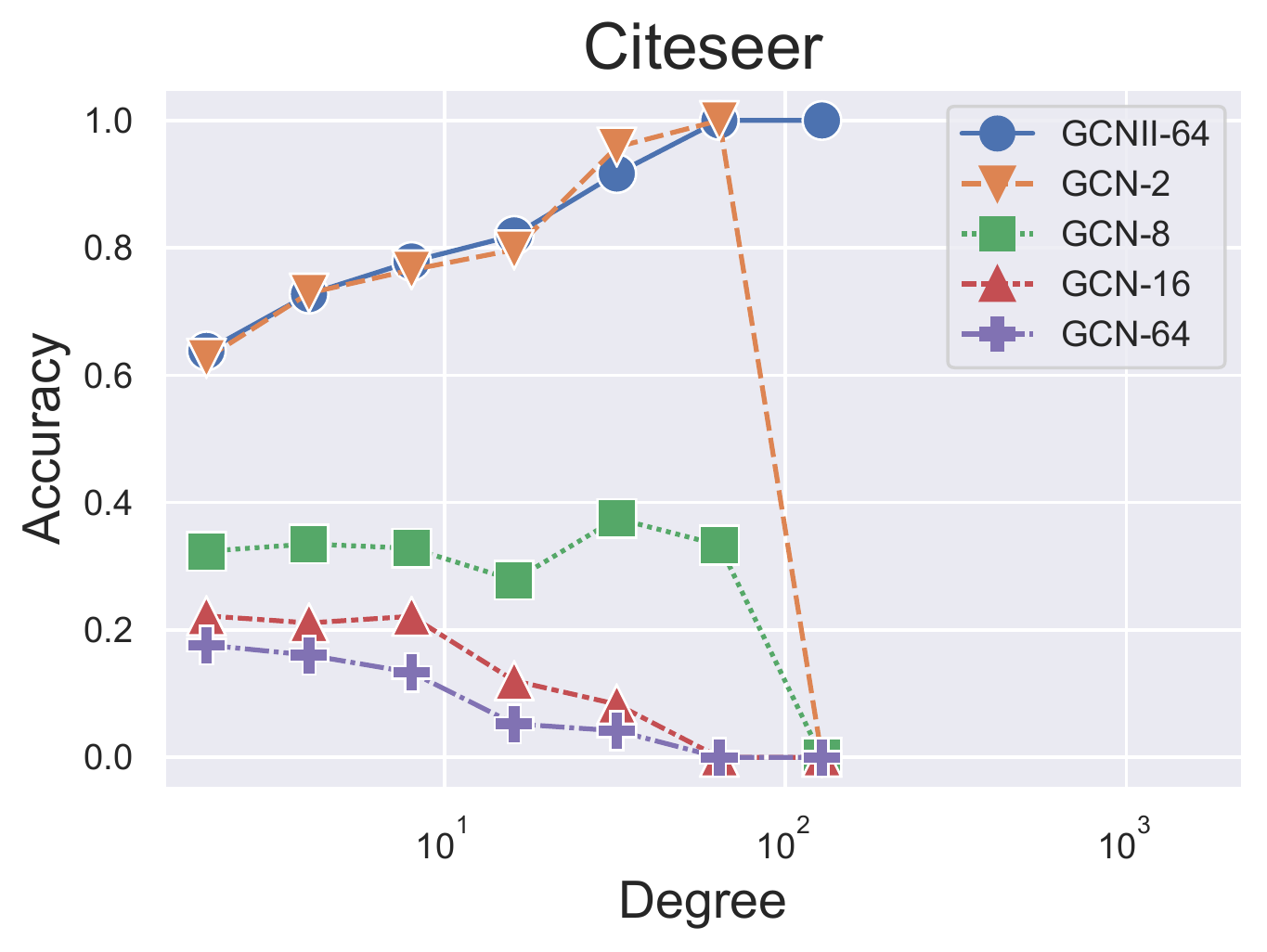}
     \includegraphics[height=42mm]{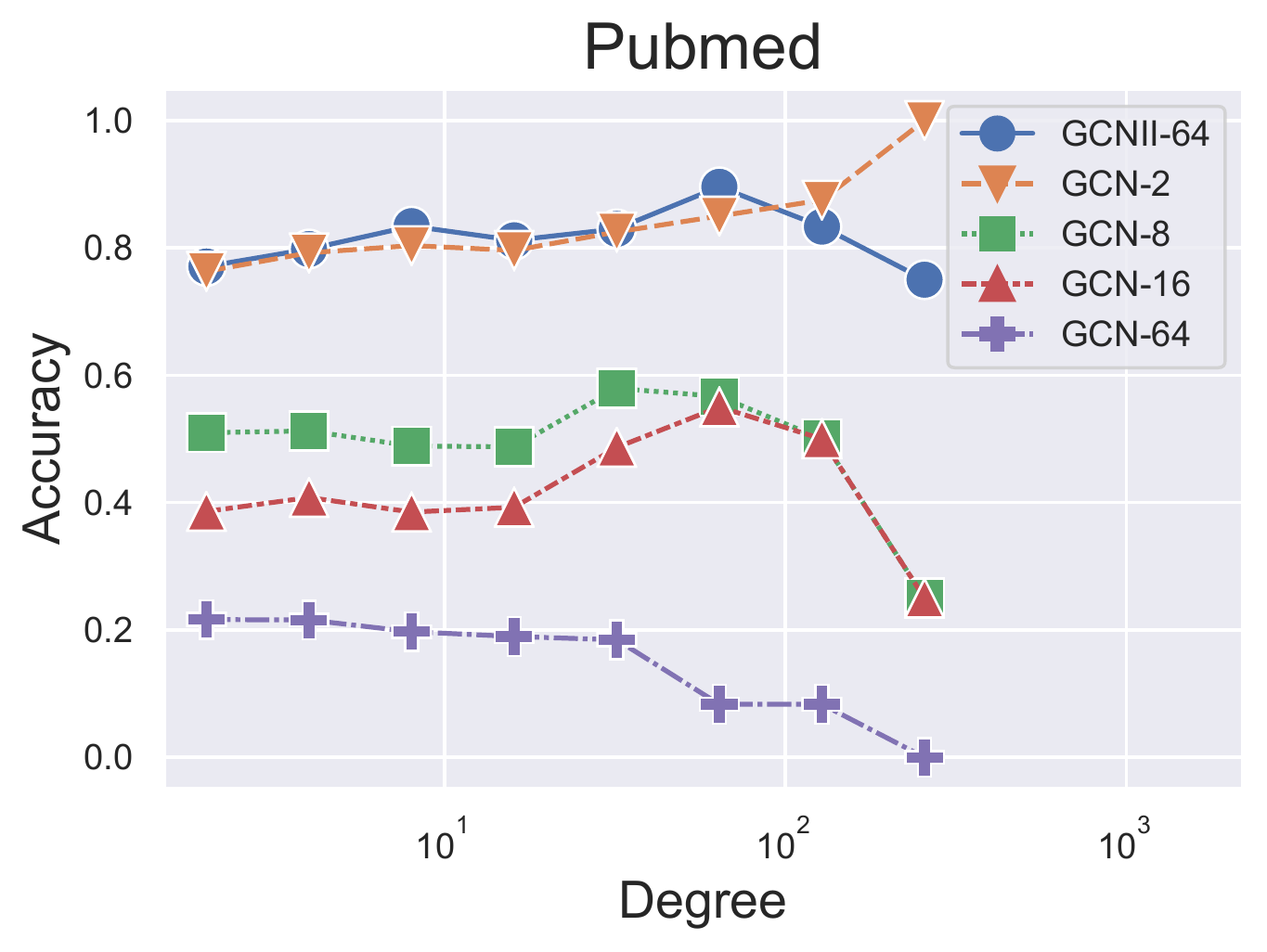}
         \vspace{-2mm}
     \caption{Semi-supervised node classification accuracy v.s. degree.} 
     \label{fig:degree_acc}
    \end{center}
    \vskip -0.1in
  \end{figure*}

  \begin{figure*}[!t]
    \begin{center}
        \includegraphics[height=42mm]{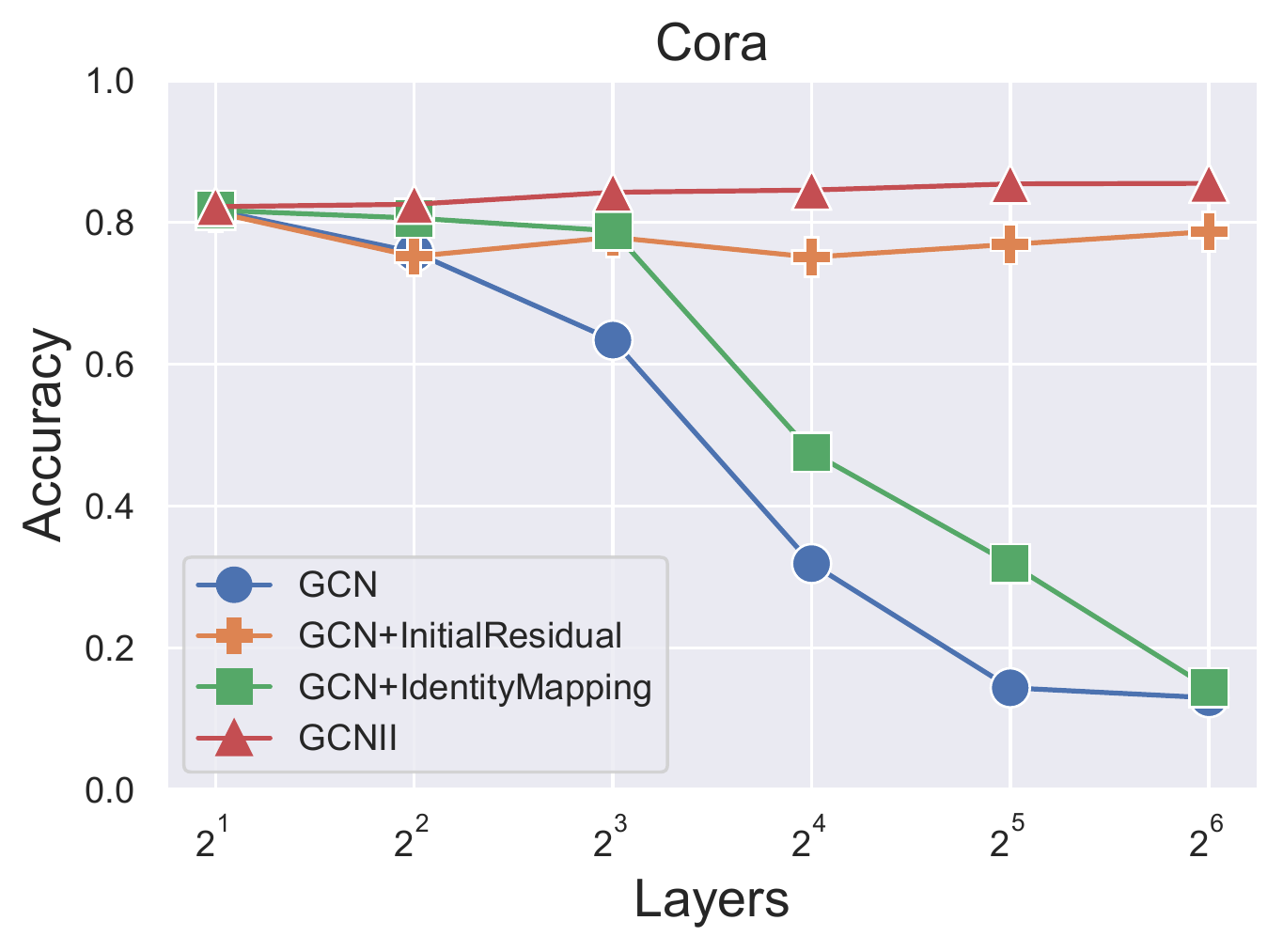}
        \includegraphics[height=42mm]{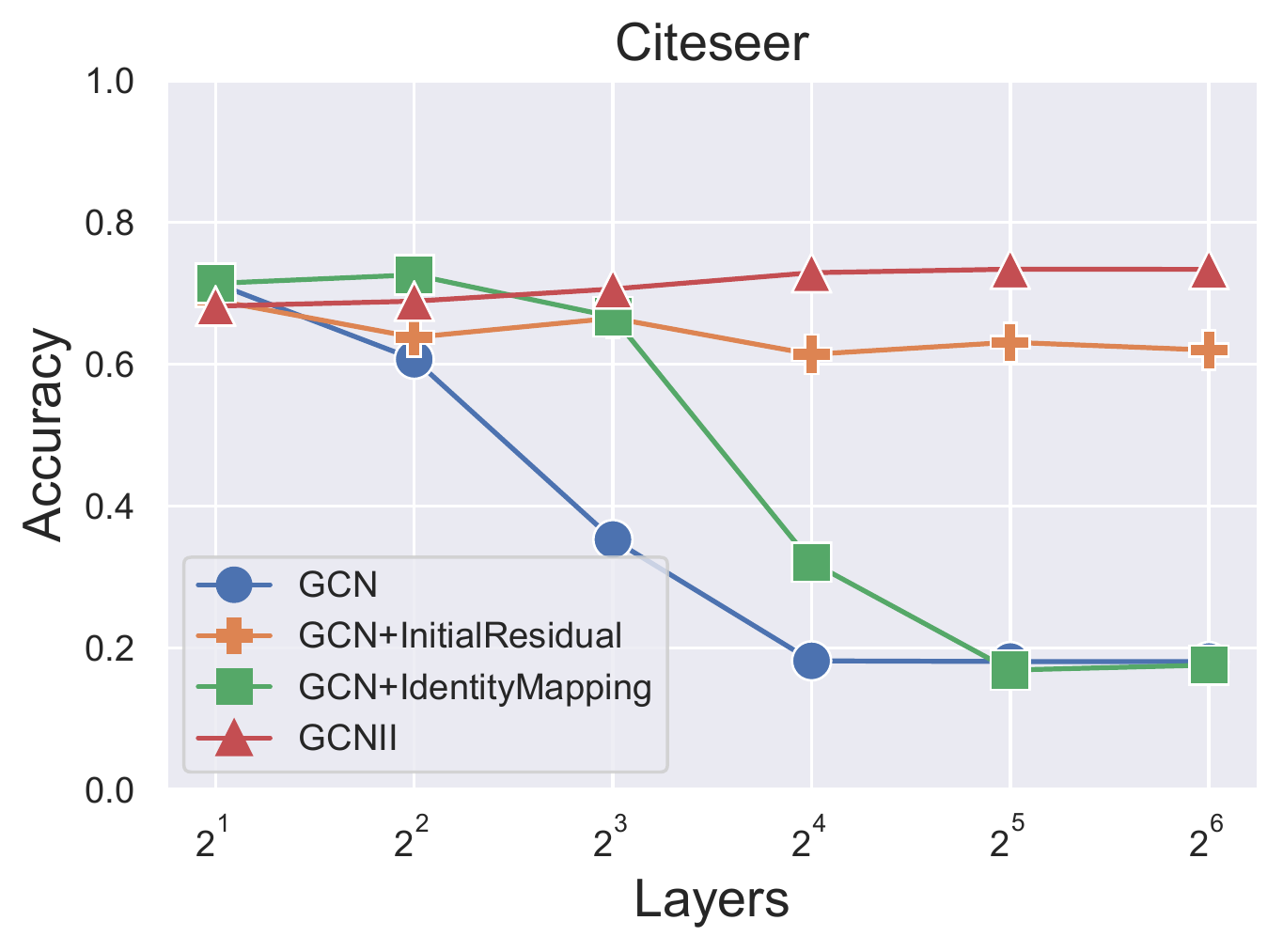}
        \includegraphics[height=42mm]{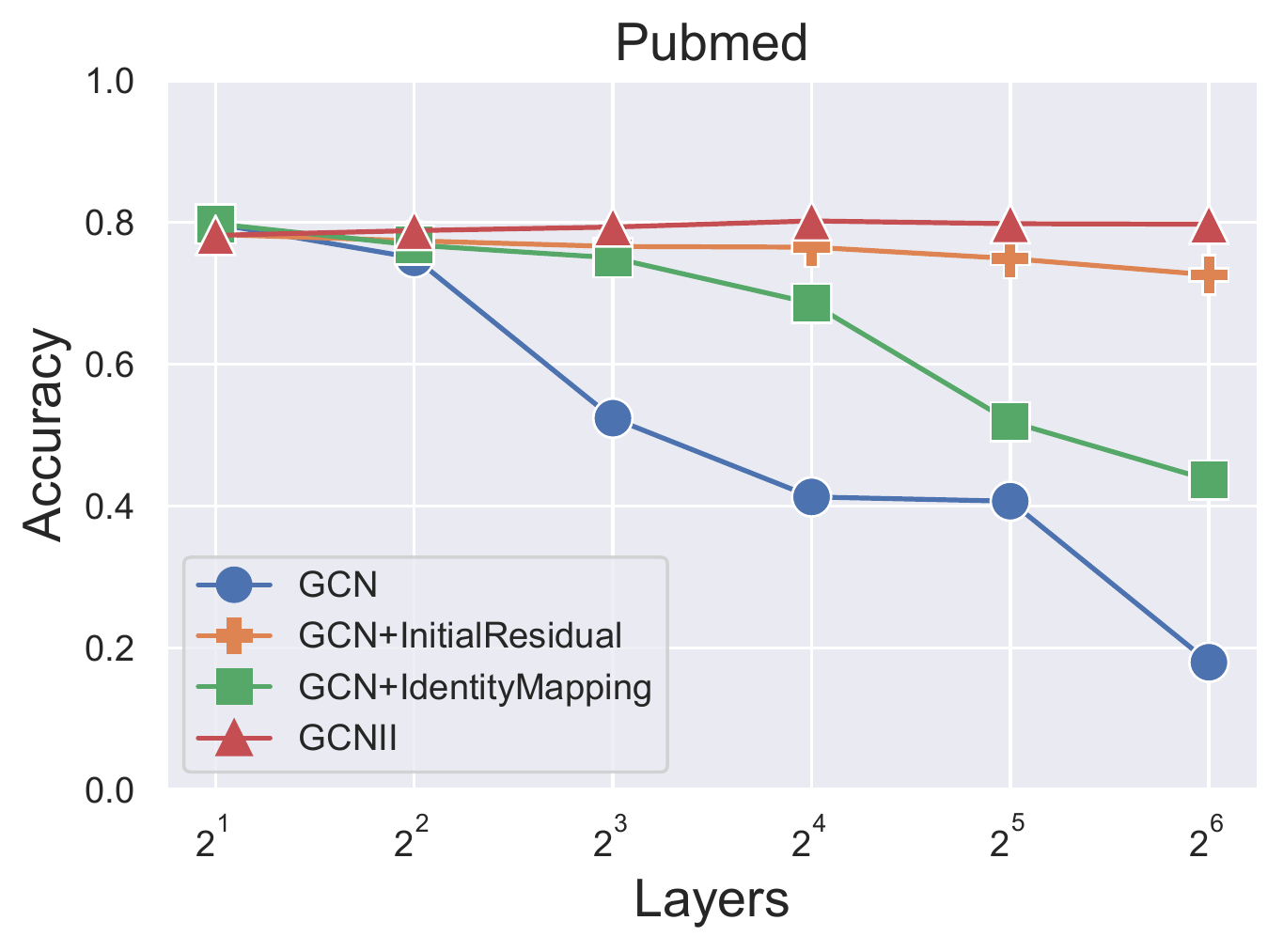}
        \vspace{-2mm}
        \caption{Ablation study on initial residual and identity mapping.} 
        \label{fig:ablation}
    \end{center}
    \vskip -0.1in
  \end{figure*}

\subsection{Over-Smoothing Analysis for GCN}
Recall that Conjecture~\ref{con:degree} suggests that nodes with higher degrees are
more likely to suffer from over-smoothing. To verify this conjecture,
we study how the classification accuracy varies with node degree in
the semi-supervised node classification task on Cora, Citeseer, and
Pubmed. More specifically, we group the nodes of
each graph according to their degrees. The $i$-th group consists of
nodes with degrees in the range $[2^i, 2^{i+1})$ for $i=0,\ldots,
\infty$. For each group, we report the average classification accuracy
of GCN with residual connection with various network depths in
Figure~\ref{fig:degree_acc}.

We have the following observations.
First of all, we note that the
accuracy of the 2-layer GCN model increases with the node degree. This is as expected, as nodes with
higher degrees generally gain more information from their neighbors. However, as
we extend the network depth, the accuracy of high-degree nodes
drops more rapidly than that of low-degree nodes. Notably, GCN with 64
layers is unable to classify  nodes with degrees larger than 100. 
This suggests that over-smoothing indeed has a greater impact on nodes
with higher degrees.

\subsection{Ablation Study}
Figure~\ref{fig:ablation} shows the results of an ablation study that evaluates the contributions
of our two techniques: initial residual connection and identity
mapping.  We make three observations from Figure~\ref{fig:ablation}:
1) Directly applying identity mapping to the vanilla GCN
retards the effect of over-smoothing marginally. 2) Directly applying initial
residual connection to the vanilla GCN relieves
over-smoothing significantly. However, the best performance is still achieved by the
2-layer model. 3) Applying identity mapping and initial residual
connection simultaneously ensures that  the accuracy increases with the
network depths. This result suggests that both techniques are needed
to solve the problem of over-smoothing.

\section{Conclusion}
\label{sec:conclusion}
We propose GCNII, a simple and deep GCN model that prevents over-smoothing by initial residual connection and identity mapping.  The theoretical analysis shows that GCNII is able to express a $K$ order polynomial filter with arbitrary coefficients.  For vanilla GCN with multiple layers,  we provide theoretical and empirical evidence that nodes with higher degrees are more likely to suffer from over-smoothing. Experiments show that the deep GCNII model achieves new state-of-the-art results on various semi- and full-supervised tasks. Interesting directions for future work include combining GCNII with the attention mechanism and analyzing the behavior of GCNII with the ReLU operation.

\section*{Acknowledgements}
This research was supported in part by National Natural Science Foundation of China (No. 61832017, No. 61932001 and No. 61972401), by Beijing Outstanding Young Scientist Program NO. BJJWZYJH012019100020098, by the Fundamental Research Funds for the Central Universities and the Research Funds of Renmin University of China under Grant 18XNLG21, by Shanghai Science and Technology Commission (Grant No. 17JC1420200), 
by Shanghai Sailing Program (Grant No. 18YF1401200) and a research fund supported by Alibaba Group through Alibaba Innovative Research Program.

\bibliography{reference}

\begin{thebibliography}{49}
\providecommand{\natexlab}[1]{#1}
\providecommand{\url}[1]{\texttt{#1}}
\expandafter\ifx\csname urlstyle\endcsname\relax
  \providecommand{\doi}[1]{doi: #1}\else
  \providecommand{\doi}{doi: \begingroup \urlstyle{rm}\Url}\fi

\bibitem[Abu{-}El{-}Haija et~al.(2019)Abu{-}El{-}Haija, Perozzi, Kapoor,
  Alipourfard, Lerman, Harutyunyan, Steeg, and
  Galstyan]{DBLP:conf/icml/Abu-El-HaijaPKA19}
Abu{-}El{-}Haija, S., Perozzi, B., Kapoor, A., Alipourfard, N., Lerman, K.,
  Harutyunyan, H., Steeg, G.~V., and Galstyan, A.
\newblock Mixhop: Higher-order graph convolutional architectures via sparsified
  neighborhood mixing.
\newblock In \emph{{ICML}}, 2019.

\bibitem[Chen et~al.(2018{\natexlab{a}})Chen, Ma, and
  Xiao]{DBLP:conf/iclr/ChenMX18}
Chen, J., Ma, T., and Xiao, C.
\newblock Fastgcn: Fast learning with graph convolutional networks via
  importance sampling.
\newblock In \emph{{ICLR}}, 2018{\natexlab{a}}.

\bibitem[Chen et~al.(2018{\natexlab{b}})Chen, Zhu, and
  Song]{DBLP:conf/icml/ChenZS18}
Chen, J., Zhu, J., and Song, L.
\newblock Stochastic training of graph convolutional networks with variance
  reduction.
\newblock In \emph{{ICML}}, 2018{\natexlab{b}}.

\bibitem[Chiang et~al.(2019)Chiang, Liu, Si, Li, Bengio, and
  Hsieh]{DBLP:conf/kdd/ChiangLSLBH19}
Chiang, W., Liu, X., Si, S., Li, Y., Bengio, S., and Hsieh, C.
\newblock Cluster-gcn: An efficient algorithm for training deep and large graph
  convolutional networks.
\newblock In \emph{{KDD}}, pp.\  257--266. {ACM}, 2019.

\bibitem[Chung(2007)]{chung2007four}
Chung, F.
\newblock Four proofs for the cheeger inequality and graph partition
  algorithms.
\newblock In \emph{Proceedings of ICCM}, volume~2, pp.\  378, 2007.

\bibitem[Dave et~al.(2019)Dave, Zhang, Chen, and
  Hasan]{DBLP:journals/dase/DaveZCH19}
Dave, V.~S., Zhang, B., Chen, P., and Hasan, M.~A.
\newblock Neural-brane: Neural bayesian personalized ranking for attributed
  network embedding.
\newblock \emph{Data Science and Engineering}, 4\penalty0 (2):\penalty0
  119--131, 2019.

\bibitem[Defferrard et~al.(2016)Defferrard, Bresson, and
  Vandergheynst]{DBLP:conf/nips/DefferrardBV16}
Defferrard, M., Bresson, X., and Vandergheynst, P.
\newblock Convolutional neural networks on graphs with fast localized spectral
  filtering.
\newblock In \emph{{NeurIPS}}, pp.\  3837--3845, 2016.

\bibitem[Fey \& Lenssen(2019)Fey and Lenssen]{Fey/Lenssen/2019}
Fey, M. and Lenssen, J.~E.
\newblock Fast graph representation learning with {PyTorch Geometric}.
\newblock In \emph{{ICLR} Workshop on Representation Learning on Graphs and
  Manifolds}, 2019.

\bibitem[Fout et~al.(2017)Fout, Byrd, Shariat, and
  Ben{-}Hur]{DBLP:conf/nips/FoutBSB17}
Fout, A., Byrd, J., Shariat, B., and Ben{-}Hur, A.
\newblock Protein interface prediction using graph convolutional networks.
\newblock In \emph{{NeurIPS}}, pp.\  6530--6539, 2017.

\bibitem[Gao \& Ji(2019)Gao and Ji]{DBLP:conf/icml/GaoJ19}
Gao, H. and Ji, S.
\newblock Graph u-nets.
\newblock In \emph{{ICML}}, 2019.

\bibitem[Guo et~al.(2019)Guo, Lin, Feng, Song, and
  Wan]{DBLP:conf/aaai/GuoLFSW19}
Guo, S., Lin, Y., Feng, N., Song, C., and Wan, H.
\newblock Attention based spatial-temporal graph convolutional networks for
  traffic flow forecasting.
\newblock In \emph{{AAAI}}, 2019.

\bibitem[Hamilton et~al.(2017)Hamilton, Ying, and
  Leskovec]{hamilton2017inductive}
Hamilton, W.~L., Ying, R., and Leskovec, J.
\newblock Inductive representation learning on large graphs.
\newblock In \emph{{NeurIPS}}, 2017.

\bibitem[Hardt \& Ma(2017)Hardt and Ma]{DBLP:conf/iclr/HardtM17}
Hardt, M. and Ma, T.
\newblock Identity matters in deep learning.
\newblock In \emph{{ICLR}}, 2017.

\bibitem[He et~al.(2016)He, Zhang, Ren, and Sun]{DBLP:conf/cvpr/HeZRS16}
He, K., Zhang, X., Ren, S., and Sun, J.
\newblock Deep residual learning for image recognition.
\newblock In \emph{{CVPR}}, pp.\  770--778, 2016.

\bibitem[Huang et~al.(2018)Huang, Zhang, Rong, and
  Huang]{DBLP:conf/nips/Huang0RH18}
Huang, W., Zhang, T., Rong, Y., and Huang, J.
\newblock Adaptive sampling towards fast graph representation learning.
\newblock In \emph{{NeurIPS}}, pp.\  4563--4572, 2018.

\bibitem[Kingma \& Ba(2015)Kingma and Ba]{DBLP:journals/corr/KingmaB14}
Kingma, D.~P. and Ba, J.
\newblock Adam: {A} method for stochastic optimization.
\newblock In \emph{{ICLR}}, 2015.

\bibitem[Kipf \& Welling(2017)Kipf and Welling]{DBLP:conf/iclr/KipfW17}
Kipf, T.~N. and Welling, M.
\newblock Semi-supervised classification with graph convolutional networks.
\newblock In \emph{{ICLR}}, 2017.

\bibitem[Klicpera et~al.(2019{\natexlab{a}})Klicpera, Bojchevski, and
  G{\"u}nnemann]{klicpera_predict_2019}
Klicpera, J., Bojchevski, A., and G{\"u}nnemann, S.
\newblock Predict then propagate: Graph neural networks meet personalized
  pagerank.
\newblock In \emph{{ICLR}}, 2019{\natexlab{a}}.

\bibitem[Klicpera et~al.(2019{\natexlab{b}})Klicpera, Wei{\ss}enberger, and
  G{\"{u}}nnemann]{DBLP:conf/nips/KlicperaWG19}
Klicpera, J., Wei{\ss}enberger, S., and G{\"{u}}nnemann, S.
\newblock Diffusion improves graph learning.
\newblock In \emph{{NeurIPS}}, pp.\  13333--13345, 2019{\natexlab{b}}.

\bibitem[LeCun et~al.(1995)LeCun, Bengio, et~al.]{lecun1995convolutional}
LeCun, Y., Bengio, Y., et~al.
\newblock Convolutional networks for images, speech, and time series.
\newblock \emph{The handbook of brain theory and neural networks},
  3361\penalty0 (10):\penalty0 1995, 1995.

\bibitem[Lee et~al.(2019)Lee, Lee, and Kang]{DBLP:conf/icml/LeeLK19}
Lee, J., Lee, I., and Kang, J.
\newblock Self-attention graph pooling.
\newblock In \emph{{ICML}}, 2019.

\bibitem[Li \& Goldwasser(2019)Li and Goldwasser]{DBLP:conf/acl/LiG19}
Li, C. and Goldwasser, D.
\newblock Encoding social information with graph convolutional networks
  forpolitical perspective detection in news media.
\newblock In \emph{{ACL}}, 2019.

\bibitem[Li et~al.(2018{\natexlab{a}})Li, Yang, Chen, and
  Lin]{li2018optimization}
Li, H., Yang, Y., Chen, D., and Lin, Z.
\newblock Optimization algorithm inspired deep neural network structure design.
\newblock \emph{arXiv preprint arXiv:1810.01638}, 2018{\natexlab{a}}.

\bibitem[Li et~al.(2019)Li, Han, Cheng, Su, Wang, Zhang, and
  Pan]{DBLP:conf/kdd/LiHCSWZP19}
Li, J., Han, Z., Cheng, H., Su, J., Wang, P., Zhang, J., and Pan, L.
\newblock Predicting path failure in time-evolving graphs.
\newblock In \emph{{KDD}}. {ACM}, 2019.

\bibitem[Li et~al.(2018{\natexlab{b}})Li, Han, and Wu]{DBLP:conf/aaai/LiHW18}
Li, Q., Han, Z., and Wu, X.
\newblock Deeper insights into graph convolutional networks for semi-supervised
  learning.
\newblock In \emph{{AAAI}}, 2018{\natexlab{b}}.

\bibitem[Li et~al.(2018{\natexlab{c}})Li, Wang, Zhu, and
  Huang]{DBLP:conf/aaai/LiWZH18}
Li, R., Wang, S., Zhu, F., and Huang, J.
\newblock Adaptive graph convolutional neural networks.
\newblock In \emph{{AAAI}}, 2018{\natexlab{c}}.

\bibitem[Liu et~al.(2019)Liu, Chen, Li, Zhou, Li, Song, and
  Qi]{DBLP:conf/aaai/LiuCLZLSQ19}
Liu, Z., Chen, C., Li, L., Zhou, J., Li, X., Song, L., and Qi, Y.
\newblock Geniepath: Graph neural networks with adaptive receptive paths.
\newblock In \emph{{AAAI}}, 2019.

\bibitem[Ma et~al.(2019)Ma, Wen, Zhong, Chen, and
  Li]{DBLP:journals/dase/MaWZCL19}
Ma, J., Wen, J., Zhong, M., Chen, W., and Li, X.
\newblock {MMM:} multi-source multi-net micro-video recommendation with
  clustered hidden item representation learning.
\newblock \emph{Data Science and Engineering}, 4\penalty0 (3):\penalty0
  240--253, 2019.

\bibitem[Oono \& Suzuki(2020)Oono and Suzuki]{oono2020graph}
Oono, K. and Suzuki, T.
\newblock Graph neural networks exponentially lose expressive power for node
  classification.
\newblock In \emph{{ICLR}}, 2020.

\bibitem[Page et~al.(1999)Page, Brin, Motwani, and Winograd]{page1999pagerank}
Page, L., Brin, S., Motwani, R., and Winograd, T.
\newblock The pagerank citation ranking: Bringing order to the web.
\newblock Technical report, Stanford InfoLab, 1999.

\bibitem[Papyan et~al.(2017)Papyan, Romano, and Elad]{papyan2017convolutional}
Papyan, V., Romano, Y., and Elad, M.
\newblock Convolutional neural networks analyzed via convolutional sparse
  coding.
\newblock \emph{The Journal of Machine Learning Research}, 18\penalty0
  (1):\penalty0 2887--2938, 2017.

\bibitem[Pei et~al.(2020)Pei, Wei, Chang, Lei, and Yang]{Pei2020GeomGCN}
Pei, H., Wei, B., Chang, K. C.-C., Lei, Y., and Yang, B.
\newblock Geom-gcn: Geometric graph convolutional networks.
\newblock In \emph{{ICLR}}, 2020.

\bibitem[Qiu et~al.(2018)Qiu, Tang, Ma, Dong, Wang, and
  Tang]{DBLP:conf/kdd/QiuTMDW018}
Qiu, J., Tang, J., Ma, H., Dong, Y., Wang, K., and Tang, J.
\newblock Deepinf: Social influence prediction with deep learning.
\newblock In \emph{{KDD}}, pp.\  2110--2119. {ACM}, 2018.

\bibitem[Rong et~al.(2020)Rong, Huang, Xu, and Huang]{rong2020dropedge}
Rong, Y., Huang, W., Xu, T., and Huang, J.
\newblock Dropedge: Towards deep graph convolutional networks on node
  classification.
\newblock In \emph{{ICLR}}, 2020.

\bibitem[Rozemberczki et~al.(2019)Rozemberczki, Allen, and Sarkar]{1909-13021}
Rozemberczki, B., Allen, C., and Sarkar, R.
\newblock Multi-scale attributed node embedding, 2019.

\bibitem[Sen et~al.(2008)Sen, Namata, Bilgic, Getoor, Gallagher, and
  Eliassi{-}Rad]{DBLP:journals/aim/SenNBGGE08}
Sen, P., Namata, G., Bilgic, M., Getoor, L., Gallagher, B., and Eliassi{-}Rad,
  T.
\newblock Collective classification in network data.
\newblock \emph{{AI} Magazine}, 29\penalty0 (3):\penalty0 93--106, 2008.

\bibitem[Shang et~al.(2019)Shang, Xiao, Ma, Li, and
  Sun]{DBLP:conf/aaai/ShangXMLS19}
Shang, J., Xiao, C., Ma, T., Li, H., and Sun, J.
\newblock Gamenet: Graph augmented memory networks for recommending medication
  combination.
\newblock In \emph{{AAAI}}, 2019.

\bibitem[Thekumparampil et~al.(2018)Thekumparampil, Wang, Oh, and
  Li]{180303735}
Thekumparampil, K.~K., Wang, C., Oh, S., and Li, L.-J.
\newblock Attention-based graph neural network for semi-supervised learning,
  2018.

\bibitem[Veli{\v{c}}kovi{\'{c}} et~al.(2018)Veli{\v{c}}kovi{\'{c}}, Cucurull,
  Casanova, Romero, Li{\`{o}}, and Bengio]{velickovic2018graph}
Veli{\v{c}}kovi{\'{c}}, P., Cucurull, G., Casanova, A., Romero, A., Li{\`{o}},
  P., and Bengio, Y.
\newblock {Graph Attention Networks}.
\newblock \emph{{ICLR}}, 2018.

\bibitem[Velickovic et~al.(2019)Velickovic, Fedus, Hamilton, Li{\`{o}}, Bengio,
  and Hjelm]{DBLP:conf/iclr/VelickovicFHLBH19}
Velickovic, P., Fedus, W., Hamilton, W.~L., Li{\`{o}}, P., Bengio, Y., and
  Hjelm, R.~D.
\newblock Deep graph infomax.
\newblock In \emph{{ICLR}}, 2019.

\bibitem[Wang et~al.(2019)Wang, Ying, Huang, and Leskovec]{wang2019improving}
Wang, G., Ying, R., Huang, J., and Leskovec, J.
\newblock Improving graph attention networks with large margin-based
  constraints.
\newblock \emph{arXiv preprint arXiv:1910.11945}, 2019.

\bibitem[Wu et~al.(2019)Wu, Souza, Zhang, Fifty, Yu, and
  Weinberger]{pmlr-v97-wu19e}
Wu, F., Souza, A., Zhang, T., Fifty, C., Yu, T., and Weinberger, K.
\newblock Simplifying graph convolutional networks.
\newblock In \emph{{ICML}}, pp.\  6861--6871, 2019.

\bibitem[Xu et~al.(2019)Xu, Shen, Cao, Qiu, and Cheng]{xu2018graph}
Xu, B., Shen, H., Cao, Q., Qiu, Y., and Cheng, X.
\newblock Graph wavelet neural network.
\newblock In \emph{{ICLR}}, 2019.

\bibitem[Xu et~al.(2018)Xu, Li, Tian, Sonobe, Kawarabayashi, and
  Jegelka]{DBLP:conf/icml/XuLTSKJ18}
Xu, K., Li, C., Tian, Y., Sonobe, T., Kawarabayashi, K., and Jegelka, S.
\newblock Representation learning on graphs with jumping knowledge networks.
\newblock In \emph{{ICML}}, 2018.

\bibitem[Yang et~al.(2016)Yang, Cohen, and
  Salakhutdinov]{DBLP:conf/icml/YangCS16}
Yang, Z., Cohen, W.~W., and Salakhutdinov, R.
\newblock Revisiting semi-supervised learning with graph embeddings.
\newblock In \emph{{ICML}}, 2016.

\bibitem[Ying et~al.(2018)Ying, He, Chen, Eksombatchai, Hamilton, and
  Leskovec]{DBLP:conf/kdd/YingHCEHL18}
Ying, R., He, R., Chen, K., Eksombatchai, P., Hamilton, W.~L., and Leskovec, J.
\newblock Graph convolutional neural networks for web-scale recommender
  systems.
\newblock In \emph{{KDD}}, pp.\  974--983. {ACM}, 2018.

\bibitem[Zhang \& Ghanem(2018)Zhang and Ghanem]{DBLP:conf/cvpr/ZhangG18}
Zhang, J. and Ghanem, B.
\newblock Ista-net: Interpretable optimization-inspired deep network for image
  compressive sensing.
\newblock In \emph{{CVPR}}, pp.\  1828--1837, 2018.

\bibitem[Zhang et~al.(2018)Zhang, Shi, Xie, Ma, King, and
  Yeung]{DBLP:conf/uai/ZhangSXMKY18}
Zhang, J., Shi, X., Xie, J., Ma, H., King, I., and Yeung, D.
\newblock Gaan: Gated attention networks for learning on large and
  spatiotemporal graphs.
\newblock In \emph{{UAI}}, 2018.

\bibitem[Zhao et~al.(2019)Zhao, Peng, Tian, Kapadia, and
  Metaxas]{DBLP:conf/cvpr/00030TKM19}
Zhao, L., Peng, X., Tian, Y., Kapadia, M., and Metaxas, D.~N.
\newblock Semantic graph convolutional networks for 3d human pose regression.
\newblock In \emph{{CVPR}}, pp.\  3425--3435, 2019.

\end{thebibliography}
\bibliographystyle{icml2020}

\clearpage
\appendix
\section{Proofs}
\subsection{Proof of Theorem~\ref{thm:GCNII}}
  \begin{proof}
For simplicity, we assume the signal vector $\mathbf{x}$ to be
    non-negative. Note that we can convert $\mathbf{x}$  into a
    non-negative input layer $\mathbf{H^{(0)}}$ by a linear
    transformation. We consider a weaker version of GCNII by fixing
    $\alpha_\ell= 0.5$ and fixing the weight matrix  $(1-\beta_\ell) \mathbf{I}_n + \beta_\ell \mathbf{W}^{(\ell)} 
 $ to be $\gamma_\ell \mathbf{I}_n$, where $\gamma_\ell$ is a
 learnable parameter. We have
\begin{equation*}
    \mathbf{H}^{(l+1)} =
    \sigma\left(\tilde{\mathbf{D}}^{-1 / 2}
      \tilde{\mathbf{A}} \tilde{\mathbf{D}}^{-1 / 2}\left(
        \mathbf{H}^{(\ell)}+ \mathbf{x}\right)\gamma_\ell\mathbf{I}_n\right).
  \end{equation*}
   Since the input feature $\mathbf{x}$ is
   non-negative, we can remove the ReLU operation:
   \begin{align*}
    \mathbf{H}^{(\ell+1)} &= \gamma_\ell
 \tilde{\mathbf{D}}^{-1 / 2}
      \tilde{\mathbf{A}} \tilde{\mathbf{D}}^{-1 / 2}\left(
                         \mathbf{H}^{(\ell)}+ \mathbf{x}\right) \\
     &= \gamma_\ell \left(\left(\mathbf{I}_n  -
       \tilde{\mathbf{L}}\right) \cdot  \left(
        \mathbf{H}^{(\ell)}+\mathbf{x}\right) \right).
  \end{align*}
  Consequently, we can express the final representation as

  \begin{equation}
    \label{eqn:GCNII_filter}
  \mathbf{H}^{(K-1)} = \left(\sum_{\ell=0}^{K-1}\left(\prod_{k=K-\ell-1}^{K-1}\gamma_{k} \right)
                       \left(\mathbf{I}_n  - \tilde{\mathbf{L}}
                       \right)^\ell \right) \mathbf{x}.
                       \end{equation}

                       On the other hand,  a polynomial filter of graph
                       $\tilde{G}$ can be expressed as
                       \begin{align*}& \quad \left(\sum_{k=0}^{K-1}
    \theta_{k} \tilde{\mathbf{L}}^{k}\right) \mathbf{x} = \left(\sum_{i=0}^{k}
    \theta_{k}  \left(\mathbf{I}_n -  \left( \mathbf{I}_n  -
                       \tilde{\mathbf{L}} \right)\right)^{k}\right)
                                                          \mathbf{x}\\
                        & = \left(\sum_{k=0}^{K-1}
    \theta_{k} \left( \sum_{\ell=0}^k(-1)^\ell {k\choose \ell} \left( \mathbf{I}_n  -
                       \tilde{\mathbf{L}} \right)^\ell\right)\right)
                          \mathbf{x}.
                       \end{align*}
                       Switching the order of summation  follows
                       that a $K$-order polynomial fiter $\left(\sum_{k=0}^{K-1}
    \theta_{k} \tilde{\mathbf{L}}^{k}\right) \mathbf{x} $ can be
  expressed as 
  \begin{equation}
    \label{eqn:filter}
  \left(\sum_{k=0}^{K-1}
    \theta_{k} \tilde{\mathbf{L}}^{k}\right) \mathbf{x} \hspace{-1mm}= \hspace{-1mm}  \left(\sum_{\ell=0}^{K-1} \left(\sum_{k=\ell}^{K-1}
    \theta_{k} (-1)^\ell {k\choose \ell}\right) \hspace{-1mm} \left( \mathbf{I}_n  -
                       \tilde{\mathbf{L}} \right)^\ell\right)
                   \mathbf{x}.
                   \end{equation}

                      To  show that GCNII can express an arbitrary  $K$-order polynomial filter,  we need to prove that
                         there exists a solution $\gamma_{\ell}$,
                         $\ell=0,\ldots, K-1$ such that the
                         corresponding coefficients  of $\left( \mathbf{I}_n  -
                       \tilde{\mathbf{L}} \right)^\ell$ in equations~\eqref{eqn:GCNII_filter}
                       and~\eqref{eqn:filter} are equivalent. More
                       precisely, we need to show the
                         following equation system 
                         $$\prod_{k=K-\ell-1}^{K-1}\gamma_{k}
                         = \sum_{k=\ell}^{K-1}
                         \theta_{k} (-1)^\ell {k\choose \ell},\,\, k=0,\ldots, K-1,$$
                         has a solution $\gamma_{\ell}$,
                         $\ell=0,\ldots, K-1$. Since the left-hand
                         side is a partial product of $\gamma_{k}$ from
                         $K-\ell-1$ to $K-1$, we can solve the
                         equation system by 
                         \begin{equation}
                           \label{eqn:theta}
                         \gamma_{K-\ell-1} = \left. \sum_{k=\ell}^{K-1}
                         \theta_{k} (-1)^\ell {k\choose \ell} \middle/ \sum_{k=\ell-1}^{K-1}
                         \theta_{k} (-1)^{\ell-1} {k\choose \ell-1} \right.,
                         \end{equation}
                         for $\ell=1,\dots, K-1$ and $\gamma_{K-1} = \sum_{k=0}^{K-1}
                         \theta_{k}$.
Note that the above solution may fail when $ \sum_{k=\ell-1}^{K-1}
                         \theta_{k} (-1)^{\ell-1} {k\choose \ell-1} =
                         0 $. In this case, we can set
                         $\gamma_{K-\ell-1}$ sufficiently large so that
                         equation~\eqref{eqn:theta} is still a good
                         approximation.   We also note that this case is rare because it implies
                         that the $K$-order filter ignores all
                         features from the $\ell$-hop neighbors.      This
       proves that a $K$-layer GCNII  can express  the $K$-th order polynomial filter $\left(\sum_{i=0}^{k}
    \theta_{i} \mathbf{L}^{i}\right) \mathbf{x}$ with arbitrary
  coefficients $\theta$. 
 \end{proof}

 \subsection{Proof of Theorem~\ref{thm:GCN}}
 To prove Theorem~\ref{thm:GCN}, we need the following {\em Cheeger Inequality}~\cite{chung2007four} for
lazy random walks. 

      \begin{lemma}[\cite{chung2007four}]
        \label{lem:cheeger}
        Let $\mathbf{p}^{(K)}_i = \left({ \mathbf{I}_n+
      \tilde{\mathbf{A}} \tilde{\mathbf{D}}^{-1 } \over 2}\right)^K
      \mathbf{e_i} $ is the $K$-th transition probability vector from
      node $i$ on connected self-looped graph $\tilde{G}$. Let $\lambda_{\tilde{G}}$ denote the
spectral gap of $\tilde{G}$. 
        The $j$-th entry of
      $\mathbf{p}^{(K)}_i$ can be bounded by
      $$ \left|\mathbf{p}^{(K)}_i(j) -   {d_j+1 \over 2m+n} \right| \le \sqrt{d_j+1 \over d_i+1}
\left( 1- {\lambda_{\tilde{G}}^2 \over 2}\right)^K.$$
        \end{lemma}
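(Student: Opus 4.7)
The proof strategy is to reduce the lazy walk to a symmetric spectral problem via reversibility. The non-symmetric transition matrix $M = (\mathbf{I}_n + \tilde{\mathbf{A}}\tilde{\mathbf{D}}^{-1})/2$ is similar to the symmetric matrix $S := (\mathbf{I}_n + \tilde{\mathbf{D}}^{-1/2}\tilde{\mathbf{A}}\tilde{\mathbf{D}}^{-1/2})/2 = \mathbf{I}_n - \tilde{\mathbf{L}}/2$ via $M = \tilde{\mathbf{D}}^{1/2} S \tilde{\mathbf{D}}^{-1/2}$; hence $M^K = \tilde{\mathbf{D}}^{1/2} S^K \tilde{\mathbf{D}}^{-1/2}$, and the whole analysis reduces to the spectral decay of powers of $S$.

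First, I would record the spectral structure of $S$. Its eigenvalues are $\mu_k = 1 - \lambda_k(\tilde{\mathbf{L}})/2$, where $\lambda_0 \le \lambda_1 \le \cdots \le \lambda_{n-1}$ are the eigenvalues of $\tilde{\mathbf{L}}$. Connectivity of $\tilde{G}$ gives $\lambda_0 = 0$ simple, so $\mu_0 = 1$ with unit eigenvector $u_0 = \tilde{\mathbf{D}}^{1/2}\mathbf{1}/\sqrt{2m+n}$ (using $\|\tilde{\mathbf{D}}^{1/2}\mathbf{1}\|^2 = \sum_j(d_j+1) = 2m+n$), and the self-loop at every vertex forces $\lambda_{n-1}(\tilde{\mathbf{L}}) < 2$, so all $\mu_k \in (0,1)$ for $k\ge 1$ and thus $|\mu_k|\le 1 - \lambda_{\tilde{G}}/2$. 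Next, expanding in the orthonormal eigenbasis $\{u_k\}$,
$$\mathbf{p}_i^{(K)}(j) = \mathbf{e}_j^T M^K \mathbf{e}_i = \sqrt{\tfrac{d_j+1}{d_i+1}}\sum_{k=0}^{n-1}\mu_k^K u_k(j)u_k(i),$$
and the $k=0$ term contributes $\sqrt{(d_j+1)/(d_i+1)} \cdot \sqrt{(d_j+1)(d_i+1)}/(2m+n) = (d_j+1)/(2m+n)$, which is exactly the stationary weight appearing in the lemma.

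To control the tail, I would apply Cauchy--Schwarz together with the orthonormality identity $\sum_k u_k(\ell)^2 = 1$:
$$\left|\sum_{k\ge 1}\mu_k^K u_k(j) u_k(i)\right| \le \left(1-\tfrac{\lambda_{\tilde{G}}}{2}\right)^K\sqrt{\sum_{k\ge 1}u_k(j)^2}\sqrt{\sum_{k\ge 1}u_k(i)^2} \le \left(1-\tfrac{\lambda_{\tilde{G}}}{2}\right)^K.$$
Multiplying by $\sqrt{(d_j+1)/(d_i+1)}$ yields $|\mathbf{p}_i^{(K)}(j) - (d_j+1)/(2m+n)| \le \sqrt{(d_j+1)/(d_i+1)}\,(1-\lambda_{\tilde{G}}/2)^K$, which implies the stated bound since $1-\lambda_{\tilde{G}}/2 \le 1-\lambda_{\tilde{G}}^2/2$ whenever $\lambda_{\tilde{G}}\le 1$, the usual regime for graph Laplacians.

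The main obstacle is not analytical but bookkeeping: pinning down the Perron direction of $S$ with the exact normalization $\sqrt{2m+n}$ so that the $k=0$ contribution separates cleanly into $(d_j+1)/(2m+n)$, and verifying that the \emph{low-frequency} endpoint $1 - \lambda_{\tilde{G}}/2$ (rather than the high-frequency endpoint $|1-\lambda_{n-1}/2|$) actually dominates the non-trivial spectrum of $S$; this is where non-bipartiteness of $\tilde{G}$ via self-loops is used. Notably, no Cheeger/conductance argument is actually needed, the inequality falls out from spectral decay of the reversible lazy walk and the final $\lambda_{\tilde{G}} \le \lambda_{\tilde{G}}^2$-style comparison is just a weakening of a tighter bound.
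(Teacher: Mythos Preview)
The paper does not prove this lemma at all; it is quoted as a black box from \cite{chung2007four} and then invoked in the proof of Theorem~\ref{thm:GCN}. So there is no ``paper's own proof'' to compare against---your spectral-decomposition argument is the entire content.

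Your argument is the standard and correct one. The similarity $M=\tilde{\mathbf{D}}^{1/2}S\tilde{\mathbf{D}}^{-1/2}$, the identification of the Perron eigenvector $u_0=\tilde{\mathbf{D}}^{1/2}\mathbf{1}/\sqrt{2m+n}$, the self-loop observation that forces all non-trivial $\mu_k\in(0,1)$ so that $\max_{k\ge1}|\mu_k|=1-\lambda_{\tilde{G}}/2$, and the Cauchy--Schwarz tail bound are all clean. In fact you obtain the sharper rate $(1-\lambda_{\tilde{G}}/2)^K$, which is the natural spectral bound for a lazy reversible chain; the paper's stated rate $(1-\lambda_{\tilde{G}}^2/2)^K$ is the Cheeger-constant form of the same inequality (Chung's paper phrases the convergence in terms of the conductance $h$ and then relates $h$ to $\lambda$), and as you note it is simply a weakening of yours once $\lambda_{\tilde{G}}\le 1$.

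Two small points to tighten. First, the last clause ``$\lambda_{\tilde{G}}\le\lambda_{\tilde{G}}^2$-style comparison'' has the inequality reversed relative to what you actually used (you need $\lambda_{\tilde{G}}^2\le\lambda_{\tilde{G}}$); your earlier line $1-\lambda_{\tilde{G}}/2\le 1-\lambda_{\tilde{G}}^2/2$ for $\lambda_{\tilde{G}}\le 1$ is the correct direction. Second, the assumption $\lambda_{\tilde{G}}\le 1$ is not automatic for all self-looped graphs, so strictly speaking your final weakening step is conditional; but since your $(1-\lambda_{\tilde{G}}/2)^K$ bound is already stronger than what Theorem~\ref{thm:GCN} needs, this is a cosmetic issue rather than a gap.
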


\begin{proof}[Proof of Theorem~\ref{thm:GCN}]
  Note that $\mathbf{I}_n = \tilde{\mathbf{D}}^{-1 / 2} 
  \tilde{\mathbf{D}}^{1 / 2}$, we have
  \begin{align*}
    \mathbf{h}^{(K)} &= \left({\mathbf{I}_n+\tilde{\mathbf{D}}^{-1 / 2}
      \tilde{\mathbf{A}} \tilde{\mathbf{D}}^{-1 / 2} \over
      2}\right)^K\cdot \mathbf{x}\\
    & = \left(  \tilde{\mathbf{D}}^{-1 / 2} \left({ \mathbf{I}_n+
      \tilde{\mathbf{A}} \tilde{\mathbf{D}}^{-1 } \over 2} \right)
                       \tilde{\mathbf{D}}^{1 / 2}\right)^K\cdot
                       \mathbf{x}\\
    &= \tilde{\mathbf{D}}^{-1 / 2}  \left({ \mathbf{I}_n+
      \tilde{\mathbf{A}} \tilde{\mathbf{D}}^{-1 } \over 2} \right)^K \cdot \left(
                       \tilde{\mathbf{D}}^{1 / 2}
                       \mathbf{x}\right).
  \end{align*}
 We express $\tilde{\mathbf{D}}^{1 / 2}
 \mathbf{x} $ as linear combination of standard basis:
 $$ \tilde{\mathbf{D}}^{1 / 2}
                       \mathbf{x} = \left(\mathbf{D} + \mathbf{I}_n\right)^{1 / 2}
                       \mathbf{x} = \sum_{i=1}^n \left( \mathbf{x}(i)
                         \sqrt{d_i+1} \right)\cdot
                       \mathbf{e_i},$$ it follows that
    \begin{align*}
    \mathbf{h}^{(K)} 
    &= \tilde{\mathbf{D}}^{-1 / 2}  \left({ \mathbf{I}_n+
      \tilde{\mathbf{A}} \tilde{\mathbf{D}}^{-1 } \over 2} \right)^K \cdot  \sum_{i=1}^n \left( \mathbf{x}(i)
                         \sqrt{d_i+1} \right)\cdot
      \mathbf{e_i}\\
      &= \sum_{i=1}^n \mathbf{x}(i) \sqrt{d_i+1} \cdot \tilde{\mathbf{D}}^{-1 / 2}
        \left({ \mathbf{I}_n+
      \tilde{\mathbf{A}} \tilde{\mathbf{D}}^{-1 } \over 2}\right)^K \cdot 
      \mathbf{e_i}.
    \end{align*}

  We note that  $ \left({ \mathbf{I}_n+
      \tilde{\mathbf{A}} \tilde{\mathbf{D}}^{-1 } \over 2}\right)^K \cdot 
      \mathbf{e_i}  = \mathbf{p}^{(K)}_i $ is the $K$-th transition
      probability vector of a random walk from
     node $i$. By Lemma~\ref{lem:cheeger}, the $j$-th entry of
      $\mathbf{p}^{(K)}_i $ can be bounded by
      $$ \left|\mathbf{p}^{(K)}_i(j)   -  {d_j+1 \over 2m+n} \right| \le \sqrt{d_j+1 \over d_i+1}
\left( 1- {\lambda_{\tilde{G}}^2 \over 2}\right)^K,$$
or equivalently, 
$$\mathbf{p}^{(K)}_i(j) = {d_j+1 \over 2m+n} \pm \sqrt{d_j+1 \over d_i+1}
\left( 1- {\lambda_{\tilde{G}}^2 \over 2}\right)^K.$$
Therefore, we can express the $j$-th entry of $ \mathbf{h}^{(K)}$  as  
\begin{align*}
   & \quad \mathbf{h}^{(K)}(j)
= \left(\sum_{i=1}^n\sqrt{d_i+1}\mathbf{x}(i) \cdot \tilde{\mathbf{D}}^{-1 /
  2}\mathbf{p}^{(K)}_i\right)(j) \\
   &\hspace{-2mm} = \hspace{-1mm}\sum_{i=1}^n\sqrt{d_i \hspace{-0.7mm} +\hspace{-0.7mm}  1}\mathbf{x}(i)
  {1\over \sqrt{d_j \hspace{-0.7mm} +\hspace{-0.7mm}  1 }
     \hspace{-1mm}}\cdot  \left( \hspace{-1mm} {d_j \hspace{-0.7mm} +\hspace{-0.7mm}  1 \over 2m \hspace{-0.7mm} +\hspace{-0.7mm}  n} \hspace{-1mm} \pm \hspace{-1mm}\sqrt{d_j \hspace{-0.7mm} +\hspace{-0.7mm}  1 \over d_i \hspace{-0.7mm} +\hspace{-0.7mm}  1}
                          \left( 1 \hspace{-1mm}-\hspace{-1mm} {\lambda_{\tilde{G}}^2 \over 2}\right)^K \hspace{-1mm}\right)\\
      &\hspace{-2mm} = \sum_{i=1}^n{\sqrt{(d_j+1)(d_i+1)} \over 2m+n}\mathbf{x}(i)
        \pm \sum_{i=1}^n \mathbf{x}(i)  \left( 1- {\lambda_{\tilde{G}}^2 \over 2}\right)^K.
    \end{align*}
    This proves
    \begin{equation*}
  \mathbf{h}^{(K)} =  {\left<\tilde{\mathbf{D}}^{1 / 2} \mathbf{1}, \mathbf{x} \right> \over 2m+n}
\tilde{\mathbf{D}}^{1 / 2} \mathbf{1} \pm\left( \sum_{i=1}^n x_i
  \right) \cdot \left( 1-
    {\lambda_{\tilde{G}}^2 \over 2}\right)^K \cdot  \mathbf{1},
\end{equation*}
and the Theorem follows.
    \end{proof}

\section{Hyper-parameters details}
\label{appendix-hyperparameters}

Table~\ref{semi-hyperparameters} summarizes the training configuration of GCNII for semi-supervised. $L_{2_{d}}$ and $L_{2_{c}}$ denote the weight decay for dense layer and convolutional layer respectively. The searching hyper-parameters include numbers of layers, hidden dimension, dropout, $\lambda$ and $L_{2_{c}}$ regularization.

Table~\ref{full-hyperparameters} summarizes the training configuration of all model for full-supervised. We use the full-supervised hyper-parameter setting from DropEdge for JKNet and IncepGCN on citation networks. For other cases, grid search was performed over the following search space: layers (4, 8, 16, 32 ,64), dropedge (0.1, 0.2, 0.3, 0.4, 0.5, 0.6, 0.7, 0.8, 0.9), $\alpha_\ell$ (0.1, 0.2, 0.3, 0.4, 0.5), $\lambda$ (0.5, 1, 1.5), $L_2$ regularization (1e-3, 5e-4, 1e-4, 5e-5, 1e-5, 5e-6, 1e-6). 

\begin{table}[t]
    \caption{The hyper-parameters for Table~\ref{semi-table}.}
    \label{semi-hyperparameters}
    \vskip 0.1in
    \begin{center}
    \begin{small}
        \setlength{\tabcolsep}{1.2mm}{
        \begin{tabular}{l|l}
        \toprule
            Dataset               & Hyper-parameters \\
        \midrule
            \multirow{1}{*}{Cora} & \begin{tabular}[c]{@{}l@{}}layers: 64, $\alpha_\ell$: 0.1, lr: 0.01, hidden: 64, $\lambda$: 0.5,\\ dropout: 0.6, $L_{2_{c}}$: 0.01, $L_{2_{d}}$: 0.0005 \end{tabular} \\
            \midrule
            \multirow{1}{*}{Citeseer} & \begin{tabular}[c]{@{}l@{}}layers: 32, $\alpha_\ell$: 0.1, lr: 0.01, hidden: 256, $\lambda$: 0.6,\\ dropout: 0.7, $L_{2_{c}}$: 0.01, $L_{2_{d}}$: 0.0005 \end{tabular} \\
            \midrule
            \multirow{1}{*}{Pubmed} & \begin{tabular}[c]{@{}l@{}}layers: 16, $\alpha_\ell$: 0.1, lr: 0.01, hidden: 256, $\lambda$: 0.4,\\ dropout: 0.5, $L_{2_{c}}$: 0.0005, $L_{2_{d}}$: 0.0005 \end{tabular} \\
            \bottomrule
            \end{tabular}}
    \end{small}
    \end{center}
    \vskip -0.1in
  \end{table}
  
\begin{table}[t]
    \caption{The hyper-parameters for Table~\ref{fulltrain-table}.}
    \label{full-hyperparameters}
    \vskip 0.1in
    \begin{center}
    \begin{small}
        \setlength{\tabcolsep}{1.2mm}{
        \begin{tabular}{ll|l}
        \toprule
            Dataset               & Method & Hyper-parameters \\
        \midrule
            \multirow{2}{*}{Cora} &\multirow{1}{*}{APPNP} & \begin{tabular}[c]{@{}l@{}}$\alpha$: 0.1, $L_2$: 0.0005, lr: 0.01, hidden: 64, \\dropout: 0.5 \end{tabular} \\ 
                                  &\multirow{1}{*}{GCNII} & \begin{tabular}[c]{@{}l@{}}layers: 64, $\alpha_\ell$: 0.2, lr: 0.01, hidden: 64,\\ $\lambda$: 0.5, dropout: 0.5, $L_2$: 0.0001 \end{tabular} \\
            \midrule
            \multirow{2}{*}{Cite.} &\multirow{1}{*}{APPNP} & \begin{tabular}[c]{@{}l@{}}$\alpha$: 0.5, $L_2$: 0.0005, lr: 0.01, hidden: 64, \\dropout: 0.5 \end{tabular} \\ 
                                   &\multirow{1}{*}{GCNII} & \begin{tabular}[c]{@{}l@{}}layers: 64, $\alpha_\ell$: 0.5, lr: 0.01, hidden: 64,\\ $\lambda$: 0.5, dropout: 0.5, $L_2$: 5e-6 \end{tabular} \\
            \midrule
            \multirow{2}{*}{Pubm.} &\multirow{1}{*}{APPNP} & \begin{tabular}[c]{@{}l@{}}$\alpha$: 0.4, $L_2$: 0.0001, lr: 0.01, hidden: 64, \\dropout: 0.5 \end{tabular} \\ 
                                   &\multirow{1}{*}{GCNII} & \begin{tabular}[c]{@{}l@{}}layers: 64, $\alpha_\ell$: 0.1, lr: 0.01, hidden: 64,\\ $\lambda$: 0.5, dropout: 0.5, $L_2$: 5e-6 \end{tabular} \\
            \midrule
            \multirow{4}{*}{Cham.}  &\multirow{1}{*}{APPNP} & \begin{tabular}[c]{@{}l@{}}$\alpha$: 0.1, $L_2$: 1e-6, lr: 0.01, hidden: 64, \\dropout: 0.5 \end{tabular} \\ 
                                    &\multirow{1}{*}{JKNet} & \begin{tabular}[c]{@{}l@{}}layers: 32, lr: 0.01, hidden: 64,\\ dropedge: 0.7, dropout: 0.5, $L_2$: 0.0001 \end{tabular} \\
                                    &\multirow{1}{*}{IncepGCN} & \begin{tabular}[c]{@{}l@{}}layers: 8, lr: 0.01, hidden: 64,\\ dropedge: 0.9, dropout: 0.5, $L_2$: 0.0005 \end{tabular} \\ 
                                    &\multirow{1}{*}{GCNII} & \begin{tabular}[c]{@{}l@{}}layers: 8, $\alpha_\ell$: 0.2, lr: 0.01, hidden: 64,\\ $\lambda$: 1.5, dropout: 0.5, $L_2$: 0.0005 \end{tabular} \\
            \midrule
            \multirow{4}{*}{Corn.}  &\multirow{1}{*}{APPNP} & \begin{tabular}[c]{@{}l@{}}$\alpha$: 0.5, $L_2$: 0.005, lr: 0.01, hidden: 64, \\dropout: 0.5 \end{tabular} \\
                                    &\multirow{1}{*}{JKNet} & \begin{tabular}[c]{@{}l@{}}layers: 4, lr: 0.01, hidden: 64,\\ dropedge: 0.5, dropout: 0.5, $L_2$: 5e-5 \end{tabular} \\
                                    &\multirow{1}{*}{IncepGCN} & \begin{tabular}[c]{@{}l@{}}layers: 16, lr: 0.01, hidden: 64,\\ dropedge: 0.7, dropout: 0.5, $L_2$: 5e-5 \end{tabular} \\ 
                                    &\multirow{1}{*}{GCNII} & \begin{tabular}[c]{@{}l@{}}layers: 16, $\alpha_\ell$: 0.5, lr: 0.01, hidden: 64,\\ $\lambda$: 1, dropout: 0.5, $L_2$: 0.001 \end{tabular} \\
            \midrule
            \multirow{4}{*}{Texa.}  &\multirow{1}{*}{APPNP} & \begin{tabular}[c]{@{}l@{}}$\alpha$: 0.5, $L_2$: 0.001, lr: 0.01, hidden: 64, \\dropout: 0.5 \end{tabular} \\ 
                                    &\multirow{1}{*}{JKNet} & \begin{tabular}[c]{@{}l@{}}layers: 32, lr: 0.01, hidden: 64,\\ dropedge: 0.8, dropout: 0.5, $L_2$: 5e-5 \end{tabular} \\
                                    &\multirow{1}{*}{IncepGCN} & \begin{tabular}[c]{@{}l@{}}layers: 8, lr: 0.01, hidden: 64,\\ dropedge: 0.8, dropout: 0.5, $L_2$: 5e-6 \end{tabular} \\
                                    &\multirow{1}{*}{GCNII} & \begin{tabular}[c]{@{}l@{}}layers: 32, $\alpha_\ell$: 0.5, lr: 0.01, hidden: 64,\\ $\lambda$: 1.5, dropout: 0.5, $L_2$: 0.0001 \end{tabular} \\
            \midrule
            \multirow{4}{*}{Wisc.}  &\multirow{1}{*}{APPNP} & \begin{tabular}[c]{@{}l@{}}$\alpha$: 0.5, $L_2$: 0.005, lr: 0.01, hidden: 64, \\dropout: 0.5 \end{tabular} \\
                                    &\multirow{1}{*}{JKNet} & \begin{tabular}[c]{@{}l@{}}layers: 8, lr: 0.01, hidden: 64,\\ dropedge: 0.8, dropout: 0.5, $L_2$: 5e-5 \end{tabular} \\
                                    &\multirow{1}{*}{IncepGCN} & \begin{tabular}[c]{@{}l@{}}layers: 8, lr: 0.01, hidden: 64,\\ dropedge: 0.7, dropout: 0.5, $L_2$: 0.0001 \end{tabular} \\
                                    &\multirow{1}{*}{GCNII} & \begin{tabular}[c]{@{}l@{}}layers: 16, $\alpha_\ell$: 0.5, lr: 0.01, hidden: 64,\\ $\lambda$: 1, dropout: 0.5, $L_2$: 0.0005 \end{tabular} \\

            \bottomrule
            \end{tabular}}
    \end{small}
    \end{center}
    \vskip -0.1in
\end{table}
\end{document}